\renewcommand{\d}{\mathrm{d}}
\newcommand{\R}{\mathbb{R}}
\newcommand{\E}{\mathsf{E}}
\newcommand{\M}{\mathcal{M}}
\renewcommand{\S}{\mathcal{S}}
\newcommand{\A}{\mathcal{A}}
\newcommand{\T}{\mathcal{T}}
\newcommand{\N}{\mathbb{N}}
\renewcommand{\P}{\mathsf{P}}
\newcommand{\F}{\mathcal{F}}
\newcommand{\G}{\mathcal{G}}
\newcommand{\X}{\mathcal{X}}
\newcommand{\cE}{\mathcal{E}}
\newcommand{\cN}{\mathcal{N}}
\newcommand{\argmin}{\mathop{\mathrm{argmin}}\limits}
\newcommand{\argmax}{\mathop{\mathrm{argmax}}\limits}
\newcommand*{\transpose}{%
  {\mathpalette\@transpose{}}%
}
\newcommand*{\@transpose}[2]{%
  % #1: math style
  % #2: unused
  \raisebox{\depth}{$\m@th#1\intercal$}%
}
\begin{document}

\title{Markov Decision Processes under External Temporal Processes}

\author{\name Ranga Shaarad Ayyagari \email rangaa@iisc.ac.in 
        \AND
        \name Revanth Raj Eega \email revanthraje@iisc.ac.in
        \AND
        \name Ambedkar Dukkipati \email ambedkar@iisc.ac.in \\
       \addr Department of Computer Science and Automation\\
       Indian Institute of Science\\
       Bengaluru, India
       }

\editor{}

\maketitle
%\doublespacing
\begin{abstract}%   <- trailing '%' for backward compatibility of .sty file
%Reinforcement Learning provides a framework for problems in sequential decision-making under uncertainty. 
%Reinforcement Learning (RL) applied to sequential decision-making problems under uncertainty has been extensively studied, predominantly in stationary environments.  
%Considering that in real-world scenarios, environments constantly change owing to various external events, the study of nonstationary MDPs is of paramount importance.

%In this paper, we study this problem in a general setting, where we consider Markov Decision Processes (MDP) with continuous state and action spaces, where the transition dynamics are influenced by an external process in a non-Markovian fashion.
\noindent
Reinforcement Learning Algorithms are predominantly developed for stationary environments, and the limited literature that considers nonstationary environments often involves specific assumptions about changes that can occur in transition probability matrices and reward functions. Considering that real-world applications involve environments that continuously evolve due to various external events, and humans make decisions by discerning patterns in historical events, this study investigates Markov Decision Processes under the influence of an external temporal process. We establish the conditions under which the problem becomes tractable, allowing it to be addressed by considering only a finite history of events, based on the properties of the perturbations introduced by the exogenous process. We propose and theoretically analyze a policy iteration algorithm to tackle this problem, which learns policies contingent upon the current state of the environment, as well as a finite history of prior events of the exogenous process. We show that such an algorithm is not guaranteed to converge. However, we provide a guarantee for policy improvement in regions of the state space determined by the approximation error induced by considering tractable policies and value functions. We also establish the sample complexity of least-squares policy evaluation and policy improvement algorithms that consider approximations due to the incorporation of only a finite history of temporal events. While our results are applicable to general discrete-time processes satisfying certain conditions on the rate of decay of the influence of their events, we further analyze the case of discrete-time Hawkes processes with Gaussian marks. We performed experiments to demonstrate our findings for policy evaluation and deployment in traditional control environments.
\end{abstract}

%\begin{keywords}
 % nonstationarity, policy iteration, sample complexity
%\end{keywords}

\section{Introduction}
In the mathematical framework of Markov Decision Processes (MDP), which forms the core of reinforcement learning, an agent interacts with an `environment', and at each time step, the agent is required to make a decision and take an action. The state of the environment changes stochastically in accordance with the Markov property, relying solely on the present state and the action performed by the agent. When solving various sequential decision-making problems, most RL algorithms assume that although the state of an MDP may be constantly changing, the rules governing state transitions remain unchanged. However, in practical applications, external events may influence the agent's environment and change its dynamics.

Consider portfolio management in finance, where an agent has access to a state consisting of various fundamental and technical indicators of a select number of financial instruments and must make decisions sequentially on whether to hold on to an instrument, sell it, or short it.
However, the price movements of these instruments may be affected by external forces, such as government decisions, such as rate hikes, or sudden market movements caused by events occurring in a completely different sector of the economy that are not being considered by the agent. Such an external event might exert a lasting influence over different time scales, depending on the type of event that occurred. A rate cut might affect stock prices for a few months, whereas a flash crash might cease its effect in a few hours. An example illustrating the persistent nonstationarity effect of perturbations due to exogenous events is shown in Figure~\ref{fig:nonstationary_illustration}.  

As another example, consider a navigation problem in which an agent has control of a robot and must travel from one point to another. Although the agent can learn to perform this task under ideal conditions, the optimal actions taken by the agent might depend on external conditions, such as rain, sudden traffic changes, or human movement patterns. Hence, while an agent can learn to achieve goals in an ideal environment, it cannot be oblivious to external changes that influence the environment dynamics differently.

\begin{figure}
  \includegraphics[width=\textwidth]{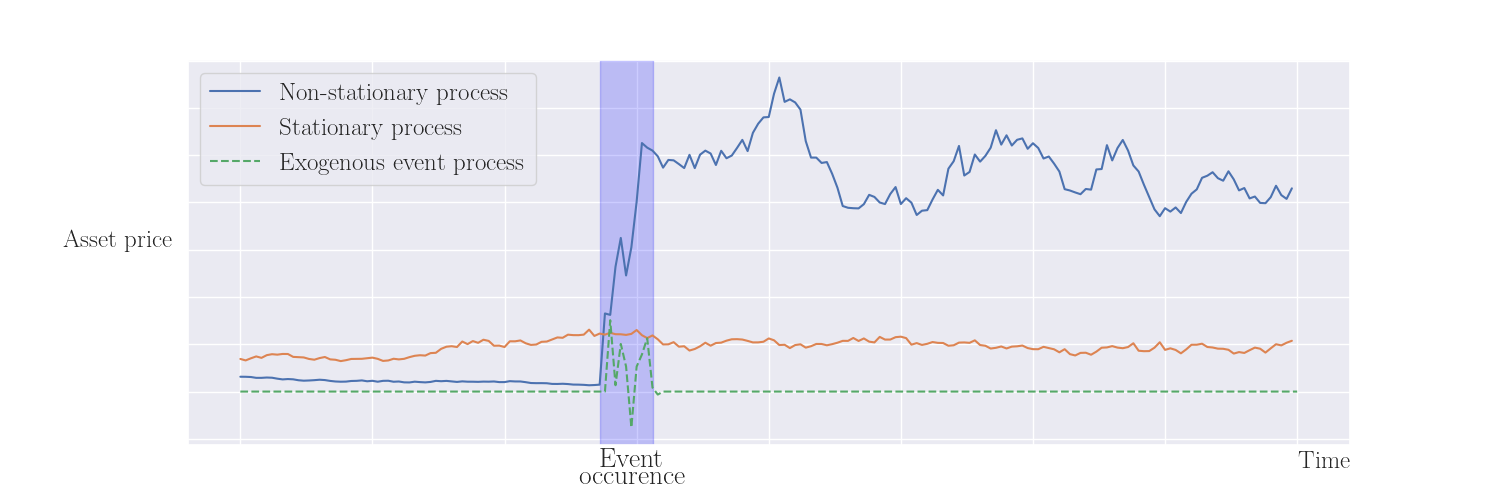}
  \caption{Sustained effect of transient nonstationarity. The plot shows the prices of the two assets following a Geometric Brownian motion. A few exogenous Gaussian shocks to one asset force it to diverge significantly from the other over the long term, with a markedly different pattern of evolution long after the effect of the exogenous event decays.}
  \label{fig:nonstationary_illustration}
\end{figure}

In the literature, nonstationarity in RL settings has been studied by considering many special cases, mostly providing practical solutions. 
Some works model nonstationarity as piece-wise stationarity~\citep{Alegre:2021:MinimumDelayAdaptationInNonStationaryRL,Hadoux:2014:SolvingHiddenSemiMarkovModeMDPs,Li:2025:TestingStationarityAndChangePointDetectionInRL}, while a few consider drifting environments~\citep{Lecarpentier:2019:NonStationaryMDPsAWorstCaseApproachUsingModelBasedRL,Cheung:2020:RLForNonStationaryMDPsTheBlessingOfMoreOptimism}. \citet{Hallak:2015:ContextualMDPs} study Contextual MDPs, in which the environment dynamics change based on externally specified ``contexts'' that are episodic and possibly unknown chosen from among a finite set of known values. \citet{Tennenholtz:2023:RLwithHistoryDependentDynamicContexts} extend this setting to handle dynamic contexts that are known to the agent, change at each time step, and influence the state transition distribution. However, their contexts are not exogenous and originate from a finite set of possible vectors.

In contrast to the existing literature,

In this study, we examine Markov Decision Processes (MDPs) with continuous state and action spaces whose transition dynamics are perturbed by an external process in a non-Markovian manner. This setting is very general, as one would only know that there is a nonstationarity without having explicit information about how the rewards and probability transition matrix are affected by it. 
This compels one to consider the history of events at each time step to make optimal decisions, increasing the computational complexity of any algorithms considered. Furthermore, external influence mandates an extra approximation for which we establish sample complexity results. 

%we study this problem in a more general setting in which an MDP is perturbed by an external discrete-time temporal process. This external process affects the dynamics of the MDP in a non-Markovian manner, resulting in a nonstationary problem. This process, being exogenous to the decision process, does not depend on the state of the environment under consideration or the agent’s actions.

%We study the conditions that the exogenous process must satisfy, along with its effect on the MDP, such that almost-optimal solutions exist that are tractable in some sense. We formulate variants of standard reinforcement learning algorithms that consider the abovementioned nonstationarity problem and theoretically analyze the properties of these algorithms.

\noindent
\textbf{Contributions.} Our contributions are as follows. \\
%\begin{enumerate}
%\item We present a nonstationary reinforcement learning formulation for an MDP under the influence of an external discrete-time temporal process, which alters the MDP dynamics in a non-Markovian manner.
\textbf{(1)} We outline the necessary conditions that ensure the existence of a well-defined solution for this problem. Then, we establish the criteria under which an approximate solution can be determined using only the current state and finite history of past events. We show that this is possible when the perturbations caused by events older than $t$ time steps on the MDP transition dynamics and the event process itself are bounded in total variation by $M_t$ and $N_t$ respectively, and $\sum_t M_t, \sum_t N_t$ are convergent series. We analyze the trade-off between the agent’s performance and the length of history considered.\\
\textbf{(2)} We propose a policy iteration algorithm and theoretically analyze its behavior. We show that policy improvement occurs in regions of the state space where the Bellman error is ``not too low'' based on $M_t$ and $N_t$. Based on our analysis, we observe that the higher the approximation error due to nonstationarity, the smaller the region in which the policy is guaranteed to improve.\\
\textbf{(3)} We then analyze the sample complexity of pathwise least-squares temporal difference policy evaluation and approximate least-squares policy iteration in this setting. We study the impact of nonstationarity on the expected error of the learned value function~\citep{Lazaric:2012:FiniteSampleAnalysisOfLSPI} and the expected suboptimality of the resultant policy after $K$ iterations. For policy evaluation, we show that, with high probability, the expected error of the value function estimate decomposes into an approximation error due to discarding old events, an inherent error due to linear function approximation, and standard error terms due to stochasticity and mixing that are present in the stationary case. For the overall policy iteration algorithm, we quantify the effect of the additional error that occurs due to the consideration of approximately greedy policies that only depend on the current state and a finite history of events.\\
\textbf{(4)} We further explain our theoretical results using the discrete-time Hawkes process and conduct experiments for policy evaluation and policy deployment in nonstationary pendulum and point maze environments to validate our results.

\section{Preliminaries and Notation}
\label{section:prelims}

Consider a Markov Decision Process (MDP) defined by a tuple $\M = (\S, \A, Q, r, \gamma )$, where $\S$ is the state space, $\A$ is the action space, $Q$ is the transition probability kernel, $r$ is the reward function, and $\gamma$ is the discount factor. At each time step $t$, the agent in state $s \in \S$ takes an action $a \in \A$ obtaining a reward $r(s,a)$, and the environment transitions to state $s' \sim Q\left( \; . \; | s, a \right)$.

In general, $\S$ and $\A$ are Borel spaces, $r: \S \times \A \rightarrow \R$ is a measurable function, and $\gamma \in (0, 1)$. $Q$ is a stochastic kernel on $\S$ given $\S \times \A$. That is, $Q(. | y)$ is a probability measure on $\S$ for each $y \in \S \times \A$, and $Q(B | .)$ is a measurable function on $\S \times \A$ for each $B \in \mathcal{B}(\S)$, the Borel $\sigma$-algebra on $\S$.

Given a class of possible policies $\Pi$, the goal of the agent is to determine a policy $\pi \in \Pi$ that achieves the maximum possible value function $V^\pi$,
\[
V^\pi(s) = \E_{s_t, a_t}^\pi \left[ \sum_{t=0}^\infty \gamma^t r(s_t, a_t) \right], \quad \pi \in \Pi, s \in \S,
\]
the expected sum of discounted rewards obtained by an agent when it starts from a state $s$ and follows a policy $\pi$ resulting in a trajectory of states $s_0 = s, s_1, s_2, \dots$ by taking actions $a_0, a_1, a_2, \dots$. The expectation is over the states and actions in the agent's trajectory owing to the stochasticity of the environment and possibly the policy itself. 

Let $\pi^*$ be an optimal policy that achieves a corresponding value function $V^{\pi^*}$, denoted $V^*$. A measurable function $v: \S \rightarrow \R$ is said to be a solution to the Bellman optimality equation if it satisfies $v = \T v$,
where $\T$ is the optimal Bellman operator defined as
\begin{equation}
[\T v] (s) = \max_{a \in \A} \left[ r(s, a) + \gamma \int_{\S} v(s') Q(ds' | s,a) \right], \quad \text{for all } s \in \S.
\label{eqn:nonstationary_bellman}
\end{equation}
A function that satisfies the optimal Bellman equation is a fixed point of the operator $\T$. Under suitable regularity conditions on the rewards and transition kernel, the optimal value function $V^*$ satisfies~\eqref{eqn:nonstationary_bellman}.

In this paper, we have considered the setting where the agent received ``rewards'' from the environment, defined by the reward function $r: \S \times \A \rightarrow \R $, and its goal is to maximize the expected returns obtained. Equivalently, many studies consider an agent incurring ``costs'', defined by an analogous cost function $c: \S \times \A \rightarrow \R$. In this case, the objective of the agent is to minimize the expected discounted sum of the costs incurred; thus, the Bellman operator is defined as the minimization of the right-hand side in ~\eqref{eqn:nonstationary_bellman}.

For this cost setting, under the assumptions of (i) the one-stage cost $c$ is lower semi-continuous, non-negative, and inf-compact on $\S \times \A$, (ii) $Q$ is strongly continuous, and (iii) there exists a policy $\pi$ such that $V^\pi(s) < \infty$ for all $s \in \S$, one can establish the existence and uniqueness of the optimal policy and value function, given by the following result.  
\begin{theorem}[Theorem 4.2.3 of \cite{Hernandez:2012:DiscreteTimeMCPs}]
If the above two assumptions hold, then:
\begin{enumerate}
	\item The optimal value function $V^*$ is the pointwise minimal solution to the above Bellman equation. If $u(.)$ is any other solution to the above equation, then $u(.) \geq V^*(.)$.
	\item There exists a function $\pi^* : \S \rightarrow \A$ such that $\pi^*(s) \in \A$ attains the minimum in the Bellman equation, i.e.,
$$
V^*(s) = c(x, \pi^*(s)) + \gamma \int V^*(s') Q(ds' | s, \pi^*(s)), \quad \text{ for all } s \in \S,
$$
The deterministic stationary policy $\pi^*$ is optimal. Conversely, any deterministic stationary policy that is optimal satisfies the above equation.
	\item If an optimal policy exists, then there exists one that is deterministic and stationary.
\end{enumerate}
\label{thm:nonstationary:optim_textbook}
\end{theorem}

%===============================================
\section{MDP influenced by a Temporal Process}
\label{section:formulation}
Let $\M = (\S, \A, Q, r, \gamma)$ be an MDP with state and action spaces $\S$ and $\A$ respectively, transition kernel $Q$, discount factor $\gamma$, and reward function $r: \S \times \A \times \S \rightarrow \R$, which defines the reward $r(s, a, s')$ as a function of the state-action pair $(s, a)$ and the next state $s'$. Consider
an external discrete-time temporal process $X = \left(t_i, X_i \right)_{i \in \N}$ that influences the transition probabilities of $\M$. Here, $X$ is exogenous; hence, it is not affected by  $\M$. The probability of an event occurring in the external process at time $t$ is a function of all past (external) events until that time. Associated with each event is a probabilistic mark $x \in \X$, where $\X$ is a closed subset of $\R^d$.

Let $H_t = \{(t_1, x_1), (t_2, x_2), \ldots, (t_j, x_j) \}$ be the history of external events until time $t$,  where the ordered pair $(t_i, x_i)$ denotes an event with mark $x_i$ that occurred at time $t_i$. History $H_{t}$ perturbs the stochastic kernel $Q$ of the MDP to give rise to a new transition kernel $Q_{H_t}$ on $\S$ given $\S \times \A$. At the same time, this history also determines the next event that occurs in the temporal process. We denote the distribution of the event mark at time $t$ by $Q^X_{H_t}$, which is a probability distribution on $\X$. Because this describes the event distribution that is external to the MDP, it does not depend on the state of the MDP or the action taken by the agent.

We consider a discrete-time event process and assume the following.

%\setlist{nolistsep}
%\begin{enumerate}[noitemsep]
\noindent
\textbf{(A1)} There exists an event with a mark, say $X = 0$ (zero), that is equivalent to a non-event.\\
\textbf{(A2)} Events that occurred in the distant past have a reduced influence on the current probability transition kernel of the MDP. Specifically, there exists a convergent series $\sum_T M_T$ of real numbers, such that at any time $t$, for any event histories $H_t$ and $H'_t$, if the MDP is in state $s$ and action $a$ is taken, and if $H_t$ and $H'_t$ differ by only one event at time $t'$, then
	$$
	\mathsf{TV} \left( Q_{H_t}(. | s,a), Q_{H'_t}(. | s,a) \right) < M_T, \quad \text{for all } s \in \S, a \in \A,
	$$
	provided $t - t' \geq T$.
	That is, if an event is older than $T$, its influence on the current probability distribution has a Total Variation distance upper bounded by $M_T$. Furthermore, $Q_{H_t}$ is independent of the previous states conditioned on the current state. That is, $Q_{H_t}$ is Markovian with respect to the state. Furthermore, $s_{t+1}$ and $x_{t+1}$ are conditionally independent given the history ($H_t$), current state ($s_t$), and action ($a_t$).

\noindent    
\textbf{(A3)} Similarly, events that occurred in the distant past also have a reduced influence on the probability distribution of the current event mark. Specifically, there exists a convergent series $\sum_T N_T$ of real numbers such that for event histories $H_t$ and $H'_t$ at time $t$ that differ at only one event at time $t'$, if $t - t' \geq T$, then
	$$
	\mathsf{TV} \left( Q^X_{H_t}, Q^X_{H'_t} \right) < N_T.
	$$

Along with the above assumptions \textbf{(A1-A3)}, the MDP $\M$ under the influence of the exogenous event process must satisfy a set of regularity conditions to guarantee the existence and uniqueness of optimal solutions.

Define a new expected reward function $r_P: \S \times \A \rightarrow \R$ induced from $r$ by $P$ as
\[
r_P(s, a) = \E_{s' \sim P(. | s, a)} \left[ r(s, a, s') \right],
\]
where $P$ is any stochastic kernel on $\S$ given $\S \times \A$. We assume the following regularity conditions on $r$ and $Q$.

\noindent
\textbf{(B1)} $r_P$ is a bounded measurable function. Without any loss of generality, we assume that its co-domain is $[0, 1] \subset \R$.

\noindent    
\textbf{(B2)} The function $r_P(s, .)$ is upper semi-continuous for each state $s \in \S$ and any probability distribution $P = Q_H$ induced as a transition kernel on the MDP by any possible external event history $H$ of the temporal process.

\noindent
\textbf{(B3)} $r_P$ is sup-compact on $\S \times \A$, i.e., for every $s \in \S, r \in \R$ and any probability distribution $P = Q_H$ induced as a transition kernel on the MDP by any possible external event history $H$ of the temporal process, the set $\{ a \in \A : r_P(s,a) \geq r \} (\subseteq \A)$ is compact.

\noindent
\textbf{(B4)} $Q$ is strongly continuous.

\noindent
\textbf{(B5)} $Q_H$ and $Q^X_H$ are strongly continuous for any feasible event history $H$.

Note that a transition kernel $P$ is said to be strongly continuous if
	$
		v(s, a) = \E_{s' \sim P (. | s,a)} [v(s')]
	$
	is continuous and bounded on $\S \times \A$ for every measurable bounded function $v$ on $\S$.

\subsection{Example}
\label{section:examples}
Non-Markov self-exciting event processes are studied extensively in finance, economics, epidemiology, etc., in the form of variants of Hawkes and jump processes. Although they are studied predominantly in continuous time, we consider discrete-time versions to fit into the traditional discrete-time reinforcement learning framework.

Consider a discrete-time marked Hawkes process $\left(B_{t}, X_t \right)_{t \in \N}$ defined as follows:
$\left( B_t \right)_{t \in \N}$ is a sequence of random variables taking values in $\left\{ 0, 1 \right\}$, with $B_{t}$ sampled from $\textsf{Bernoulli}\left( p_{t} \right)$, where the intensity $p_{t}$ at time $t$ depends on the realizations of $B_t$ at the previous time steps as
%    $p_{1}  = \alpha_0$  and 
%    $p_{t} = \alpha_0 + \sum_{t' = 1}^{t - 1} \alpha_{t-t'} B_{t'}$ for  $t > 1$,
\begin{align*}
    p_{1}  = \alpha_0, \text{ and} \:\:\:
    p_{t} = \alpha_0 + \sum_{t' = 1}^{t - 1} \alpha_{t-t'} B_{t'} %\; \text{ for } t > 1,
\end{align*}
and the marks $X_t$ are real-valued Gaussian random variables clipped to $[-b,b]$, satisfying
\begin{align*}
    X_{1} & \begin{dcases} \sim \cN_{[-b,b]} \left( 0, \; 1 \right) & \text{ if } B_1 = 1, \\
                    = 0 & \text{ otherwise,} 
                    \end{dcases} \text{ for } t > 1, \\
    X_{t} & \begin{dcases} \sim \cN_{[-b,b]} \left( \sum_{t' = 1}^{t-1} \beta_{t - t'} B_{t'} X_{t'}, \; 1 \right) & \text{ if } B_t = 1, \\
                    = 0 & \text{ otherwise,} 
                    \end{dcases} \text{ for } t > 1,
\end{align*}
where $\cN_{[-b,b]}$ is the normal distribution clipped to be within the interval $[-b,b]$, and $(\alpha_t)$ and $(\beta_t)$ are sequences that converge sufficiently quickly to zero.
The sum $S_t = \sum_{t' = 1}^t B_t$ is a self-exciting counting process that is the discrete-time counterpart of the continuous-time Hawkes process.

The above-defined sequence of random variables satisfies Assumptions \textbf{(A1)} and \textbf{(A2)}, which are required for the external process. $X_t$ is zero when no event occurs, and old events have a reduced and decaying influence on the current events. More precisely, given two historical sequences $(X_{t'})_{t' < t}$ and $(X'_{t'})_{t' < t}$ that only differ by one event at some time $t' \leq t - T$ with $B'_{t'} = X'_{t'} = 0$ and $B_{t'} = 1$, $X_{t'} = x$, the difference in intensities is $p_t - p'_t = \alpha_{t - t'}$, and hence,
\begin{align}
    \mathsf{TV} \left( X_t, X'_t \right) & \leq \frac{\alpha_{t-t'}}{2} + p_t \mathsf{TV} \left( \cN_{[-b,b]} \left(\sum_{t'' = 1}^{t-1} \beta_{t - t''} B_{t''} X_{t''}, \; 1 \right), \cN_{[-b,b]} \left(\sum_{t'' = 1}^{t-1} \beta_{t - t''} B_{t''} X'_{t''}, \; 1 \right) \right) \nonumber \\
    & \leq \frac{\alpha_{t-t'}}{2} + p_t \mathsf{TV} \left( \cN \left(\sum_{t'' = 1}^{t-1} \beta_{t - t''} B_{t''} X_{t''}, \; 1 \right), \cN\left(\sum_{t'' = 1}^{t-1} \beta_{t - t''} B_{t''} X'_{t''}, \; 1 \right) \right) \nonumber \\
        & = \frac{\alpha_{t-t'}}{2} + p_t \; \mathsf{erf}\left( \frac{\beta_{t-t'} x}{2\sqrt{2}} \right) 
         < \frac{\alpha_T}{2} + \mathsf{erf}\left( \frac{\beta_{T}b}{2\sqrt{2}} \right) = N_T.
    \label{eqn:nonstationary_tv_example}
\end{align}
One can find the details of this calculation in~Appendix~\ref{Appendix:TV-erf-calculation}.
That is, for sufficiently fast converging $\alpha_T$ and $\beta_T$, the total variation perturbation $N_T$ due to events older than $T$ time steps approaches zero, satisfying Assumption \textbf{(A2)}. 

While $(\alpha_n)$ and $(\beta_n)$ are any general sequences that need to satisfy some regularity conditions \citep{Seol:2015:LimitThmsForDiscreteHawkes} and the above bound, they could also decay in a more structured manner, as follows. For instance, for some $c, \lambda > 0$, letting
$
\alpha_n = c e^{-\lambda n}
$
gives a process that has an exponentially decaying intensity due to past events. In the terminology employed to describe the standard continuous-time Hawkes process, $\alpha_0 = c$ is like the base or background intensity, and $e^{-\lambda x}$ is the excitation function. Similarly, $\beta_n$, which determines the distribution of the event, could also be a parametric sequence that decays exponentially or polynomially.

%================================================
\section{Guarantees for the Existence of Almost-Optimal Solutions}
\label{section:guarantees}

Consider the MDP $\M = (\S, \A, Q, r, \gamma)$, where $\S$ and $\A$ are closed subsets of $\R^m$ and $\R^n$ respectively, and $r: \S \times \A \times \S \rightarrow \R$ is the reward as a function of the state, action, and resultant next state.
Given an external discrete-time temporal process $X = \{(t_i, X_{t_i})\}_i$ influences MDP $\M$, we construct a new augmented MDP $\M_{X}$, where the marks of all the events that occurred till time $t$ are appended to the state $s \in \S$ to form a new augmented state $\bar{s} \in \bar{\S}$, where
$$
\bar{s} = (s, x_t, x_{t-1}, x_{t-2}, x_{t-3}, \dots), \text{ and } \bar{\S} = \S \times \prod_{t=0}^\infty \X \left( \subseteq \R^m \times \prod_{t=0}^\infty \R^d \right).
$$
Here, $x_t$ is the mark of an external event that occurred at time $t$. If no event occurs at time $t$, then $x_t = 0$. 

Due to the influence of external events, the decision process $(\S, \A, Q, r, \gamma)$ does not remain an MDP because the transition function is perturbed by external events. However, the corresponding decision process $\M_X = (\bar{\S}, \A, \bar{Q}, r, \gamma)$ is an MDP, where $\bar{Q}$ is the transition kernel on the augmented state $\bar{s}$, because all factors that affect the transitions are included in the augmented state $\bar{s} \in \bar{\S}$.

Furthermore, the state space of $\M_{X}$ is infinite-dimensional. However, Assumptions \textbf{(A2)} and \textbf{(A3)} in Section~\ref{section:formulation} allow one to study the possibility of finding a suitable policy that only depends on a finite horizon of past events. In this context, we present the following results. 
\begin{theorem}
	Suppose the MDP $\M$ and the external process satisfy the assumptions \textbf{(A1-A3)} and the regularity conditions \textbf{(B1-B5)}. Then, we have the following.
	\begin{enumerate}[(1)]
	\item There exists a deterministic optimal policy for the augmented MDP $\M_X$, with corresponding optimal value function $V^*$, that satisfies the optimal Bellman equation.
	\item For any $\epsilon> 0$, there exists a time horizon $T$ and a policy $\pi^{(T)}$ that depends only on the current state and past $T$ events such that
	$$
		V^{\pi^{(T)}} \geq V^* - \epsilon.
	$$
	That is, for every required maximum suboptimality $\epsilon$, there exists a corresponding ``finite-horizon" policy that achieves that $\epsilon$-suboptimal value. Further, the past event horizon $T$  required for $\epsilon$-suboptimality is $T$ that satisfies
    \begin{equation}
    \sum_{t=T+1}^\infty M_t < \frac{\epsilon (1 -   \gamma)^2}{4} \text{  and} \sum_{t=T+1}^\infty N_t <  \frac{\epsilon (1 - \gamma)^2}{4}.
    \label{eqn:nonstationary_series_convergence}
    \end{equation}
\end{enumerate}
\label{thm:nonstationary_optimality}
\end{theorem}
This result guarantees that the formulated problem is well-posed, with an optimal solution that can be approximated using a tractable policy that is a function of the current state and only a finite history of events. This approximation can be as accurate as necessary based on the properties of the exogenous event process and the size of the event history. To prove Theorem~\ref{thm:nonstationary_optimality}, we establish the following result, whose proof is given in Appendix~\ref{proof:nonstationary_lemma_approx}.

\begin{lemma}
    Consider a new auxiliary MDP $\M_X^{(T)}$ that has the same underlying stochastic process as the augmented MDP $\M_X$, with one difference: only events in the past $T$ time steps affect the current state transition and event distribution, with events before that having no effect and being effectively zero. Let $\pi$ be an arbitrary policy as a function of the augmented state $\bar{s} \in \bar{S}$. That is, $\pi$ can be either stochastic or deterministic and can depend on the current state $s \in \S$ and any number of past events in $\X$. Then,
	$$
		\left\Vert V^\pi_{\M^{(T)}_X} -  V^\pi_{\M_X} \right\Vert_{\infty} \leq
			\frac{1}{1 - \gamma} \left( \left\Vert r \right\Vert_\infty + \gamma \left\Vert V^\pi_{\M^{(T)}_X} \right\Vert_\infty \right) \sum_{t=T+1}^\infty \left( M_t + N_t \right),
	$$
    where $r$ is the reward function, and $V^\pi_\M$ denotes the value function of policy $\pi$ in MDP $\M$.
\label{lemma:nonstationary_approx}
\end{lemma}

\paragraph{Proof of Theorem~\ref{thm:nonstationary_optimality}}
The state space is Borel because it is a countable product of Borel sets. Therefore, part (1) follows from the regularity conditions \textbf{(B1-B5)} satisfying the assumptions of Theorem 4.2.3 in \citet{Hernandez:2012:DiscreteTimeMCPs}, coupled with the boundedness of the reward function. For part (2), let $\pi^*$ be the deterministic optimal stationary policy for $\M_X$ and $\pi^{*(T)}$ be the deterministic optimal stationary policy for $\M_X^{(T)}$. From Lemma~\ref{lemma:nonstationary_approx}, since $\Vert r \Vert_\infty \leq 1$ and $\Vert V \Vert_\infty \leq \frac{1}{1 - \gamma}$,
\begin{align*}
	V^{\pi^{*(T)}}_{\M_X} \geq V^{\pi^{*(T)}}_{\M^{(T)}_X} - \frac{1}{(1 - \gamma)^2} \left( \sum_{t=T+1}^\infty \left(M_t + N_t \right) \right) 
	 & \geq V^{\pi^{*}}_{\M^{(T)}_X} - \frac{1}{(1 - \gamma)^2} \left( \sum_{t=T+1}^\infty \left(M_t + N_t \right) \right) \\
	& \geq V^{\pi^{*}}_{\M_X} - \frac{2}{(1 - \gamma)^2} \left( \sum_{t=T+1}^\infty \left(M_t + N_t \right) \right).
\end{align*}
Since both series $\sum_t M_t$ and $\sum_t N_t$ converge, there exists $T \in \N$ satisfying~\eqref{eqn:nonstationary_series_convergence}. Choosing such an $T$ proves the result. \hfill $\blacksquare$

%===========================================
\section{A Policy Iteration Algorithm}
\label{section:policy_iteration}
Considering that we have established the existence of an $\epsilon$-optimal policy that is a function of only a finite event horizon, the aim is to find such a policy. To this end, we propose a policy iteration algorithm that alternates between approximate policy evaluation and approximate policy improvement. This procedure is listed in Algorithm~\ref{alg:nonstationary_policy_iteration}.

The policy evaluation step considers candidate value functions that are functions of a finite event horizon. We define the value function in $\M_X$ at an infinitely augmented state as a function of the state augmented by just a finite event horizon, by sampling the previous older events from an arbitrary distribution $\mu_1$. In practice, $\mu_1$ is the actual event process. Thus, the value function can be approximated using Monte Carlo methods. For the purpose of analysis, we consider the exact evaluation of the approximate value function.

In the policy improvement step, the policy is improved based on the reward and value function resulting from one transition. This transition can be due to past events from any arbitrary distribution $\mu_2$. The deterministic policy at an augmented state depends only on the finite event horizon and is the action that maximizes the right-hand side, which depends on the approximate value function.

Solving the optimization problem in the policy improvement step requires knowledge of models $r(s, a, s')$, $Q_H$, and $Q_H^X$ for arbitrary histories $H$. In this work, we ignore any approximation errors that arise due to estimating the models and analyze the algorithm by assuming the exact step is taken as defined.

\begin{algorithm}
\begin{algorithmic}
	\State Start with arbitrary deterministic policy $\pi_0: \S \times \X^{T+1} \rightarrow \A$\;
	\For{each $k \in \{0, 1, \dots \}$ till termination} 
		\State $//$ \texttt{Approximate Policy Evaluation}
            \State \begin{equation*}
				\hat{V}_k \left( \left( s, x_{0:\infty} \right) \right) = \E_{x'_{T+1:\infty} \sim \mu_1} V^{\pi_k}\left( \left( s, x_{0:T}, x'_{T+1:\infty} \right) \right).
		\end{equation*}
  
		\State $//$ \texttt{Approximate Policy Improvement}
            \State \begin{equation*}
			\pi_{k+1} \left( (s, x_{0:\infty}) \right) = \argmax_{a \in A} \; \E_{\substack{s' \sim Q_{x_{0:T}, x'_{T+1:\infty}}(. | s,a) \\ x' \sim Q^X_{x_{0:T}, x'_{T+1:\infty}}(.|s) \\ x'_{T+1:\infty} \sim \mu_2}} \bigg[ r(s, a, s')  + \gamma \hat{V}_k \left( \left( s', x', x_{0:T-1} \right) \right) \bigg].
		\end{equation*}
	\EndFor
\end{algorithmic}
\caption{Policy Iteration}
\label{alg:nonstationary_policy_iteration}
\end{algorithm}

\subsection{Analysis}
\label{section:policy_iteration:subsection:analysis}
For the analysis of this algorithm, we establish the 
following results, which bound the difference in the value function at two different augmented states when they differ in events that are older than $T$ time steps.

\begin{lemma}
Let $\pi$ be a policy that depends only on the current state and past $T$ events. Then,
\begin{align*}
	\sup_{\bar{s} = (s, x_{0:\infty})} \left\vert V^{\pi} \left( \bar{s} \right) - V^{\pi} \left( \left( s, x_{0:T}, (0)_{T+1: \infty} \right) \right) \right\vert < \frac{1}{(1 - \gamma)^2} \sum_{t = T+1}^\infty \left( M_t + N_t \right).
\end{align*}
\label{lemma:nonstationary_state_cropping}
\end{lemma}
Here, $(s, x_{0:T}, (0)_{T+1: \infty}) = (s, x_0, x_1, \dots, x_T, 0, 0, \dots) \in \bar{S}$ is the extended state that has been essentially ``truncated'' and made finite by replacing events older than $T$ time steps with zero.
 The proof of Lemma~\ref{lemma:nonstationary_state_cropping} is provided in Section~\ref{proof:nonstationary_lemma_state_cropping}.
 
\begin{corollary}
For a given policy $\pi$ that is a function of events only in the past $T$ time steps, and for any two augmented states $\bar{s}_1$ and $\bar{s}_2$ that differ in any number of events that occurred before the previous $T$ time steps,
\begin{equation*}
\left\vert V^{\pi} \left( \bar{s}_1 \right) - V^{\pi} \left( \bar{s}_2 \right) \right\vert \leq \frac{2}{(1 - \gamma)^2} \sum_{t = T+1}^\infty \left( M_t + N_t \right).
\end{equation*}
\end{corollary}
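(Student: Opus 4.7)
The plan is to reduce the claim to two applications of Lemma~\ref{lemma:state_cropping} via the triangle inequality, using the truncated augmented state as a common reference point. The key observation is that since $\bar{s}_1$ and $\bar{s}_2$ differ only in events older than $T$, their ``recent'' coordinates coincide: writing $\bar{s}_1 = (s, x_{0:T}, y^{(1)}_{T+1:\infty})$ and $\bar{s}_2 = (s, x_{0:T}, y^{(2)}_{T+1:\infty})$, both share the same current state $s \in S$ and the same recent event window $x_{0:T}$.

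Define the common truncated state $\bar{s}_0 := (s, x_{0:T}, (0)_{T+1:\infty})$, obtained from either $\bar{s}_1$ or $\bar{s}_2$ by zeroing out all events older than $T$ time steps. Since Lemma~\ref{lemma:state_cropping} takes a supremum over all infinite tails of old events, it applies separately to $\bar{s}_1$ and $\bar{s}_2$, giving
\begin{equation*}
\bigl| V^\pi(\bar{s}_i) - V^\pi(\bar{s}_0) \bigr| < \frac{1}{(1-\gamma)^2} \sum_{t=T+1}^\infty (M_t + N_t), \qquad i \in \{1, 2\}.
\end{equation*}

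The conclusion then follows from the triangle inequality:
\begin{equation*}
\bigl| V^\pi(\bar{s}_1) - V^\pi(\bar{s}_2) \bigr| \leq \bigl| V^\pi(\bar{s}_1) - V^\pi(\bar{s}_0) \bigr| + \bigl| V^\pi(\bar{s}_0) - V^\pi(\bar{s}_2) \bigr| \leq \frac{2}{(1-\gamma)^2} \sum_{t=T+1}^\infty (M_t + N_t).
\end{equation*}

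There is essentially no obstacle here; the corollary is a direct two-line consequence of Lemma~\ref{lemma:state_cropping}. The only subtlety worth checking is that $\pi$ being a function of only the current state and the past $T$ events is precisely the hypothesis of the lemma, so both invocations are legitimate, and that $\bar{s}_1$ and $\bar{s}_2$ indeed agree on the first $T+1$ event coordinates (otherwise the shared truncation $\bar{s}_0$ would not be well defined). All the substantive work is absorbed into Lemma~\ref{lemma:state_cropping}.
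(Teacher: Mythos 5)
Your proof is correct and is exactly the argument the paper intends: the corollary is stated without proof as an immediate consequence of Lemma~\ref{lemma:state_cropping}, and the factor of $2$ comes precisely from applying the lemma to each of $\bar{s}_1$ and $\bar{s}_2$ against their common truncation and invoking the triangle inequality. Your check that the two states share the same first $T+1$ event coordinates (so the common truncated state is well defined) is the right subtlety to note.
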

Essentially, this means that if a policy considers only events in the past $T$ time steps, its value at states differing at older events differs by an amount proportional to the extent of nonstationarity induced by all events older than $T$ time steps.
This helps to characterize the behavior of the policy iteration procedure described in Algorithm~\ref{alg:nonstationary_policy_iteration}. While exact policy iteration results in a sequence of policies that converge to the optimal policy, our approximate policy iteration produces policies whose value functions satisfy the following result, the proof of which is provided later in this section.
\begin{lemma}
\begin{equation*}
	V^{\pi_{k+1}} \geq V^{\pi_k} - \frac{3(1 + \gamma)}{(1 - \gamma)^3} \sum_{t=T+1}^\infty (M_t + N_t).
\end{equation*}
\label{lemma:nonstationary_policy_improvement}
\end{lemma}
That is, while there is no guarantee that the sequence of value functions $\left\{ V^{\pi_k} \right\}_{k}$ is non-decreasing, Lemma~\ref{lemma:nonstationary_policy_improvement} guarantees that they do not decrease by more than a specific amount. This is due to the approximation error $\epsilon  = \displaystyle\frac{2}{(1 - \gamma)^{2}} \sum_{t = T+1}^\infty (M_t + N_t)$ at each time step induced by the nonstationarity due to exogenous events.

In general, approximate policy iteration algorithms converge under the assumption that the approximation errors gradually vanish over the course of the algorithm. However, our approximation error of $\epsilon$ remains constant throughout, leading to the above-mentioned situation. Intuitively, such a small approximation error $\epsilon$ should only disturb the convergence of the algorithm when we are $\epsilon$-close to the optimal solution. During the initial iterations, when far from the solution, such small approximation errors are insignificant.

Next, we establish that the degradation of the value function can occur only in parts of the state space where the Bellman error is close to zero. A Bellman error of zero generally, although not always, corresponds to an optimal policy. This interplay between the Bellman error and approximation error in our algorithm is formalized by the following result.

\begin{theorem}
At iteration $k$ of the policy iteration procedure given in Algorithm~\ref{alg:nonstationary_policy_iteration}, for any augmented state $\bar{s} \in \bar{\S} = \S \times \displaystyle \prod_{t=0}^\infty \X$, at least one of the following holds:
\begin{align}
 V^{\pi_{k+1}}(\bar{s}) &\geq V^{\pi_k}(\bar{s}), \text{ or} \label{eqn:nonstationary_improvement} \\
	 \left\vert \T V^{\pi_k}(\bar{s}) - V^{\pi_k}(\bar{s}) \right\vert & < \frac{49}{8(1 - \gamma)^3} \displaystyle \sum_{t=T+1}^\infty (M_t + N_t), \label{eqn:nonstationary_small_bellman_error}
\end{align}
where $V^\pi$ denotes the value function of policy $\pi$ in MDP $\M_X$.
\label{thm:nonstationary_policy_iteration}
\end{theorem}

That is, the policy’s performance improves everywhere except in the region of the state space where the Bellman error is very small. The faster the decay of the exogenous event influence, the smaller the approximation error and the smaller the region of the state space in which the policy may not improve.

\begin{corollary}
    Consider an MDP with the exogenous process being a discrete-time Hawkes process as described in Section~\ref{section:examples}, with an exponentially decaying excitation function $c_\alpha e^{-\lambda_\alpha t}$, along with an exponentially decaying $\beta_t = c_\beta e^{- \lambda_\beta t}$ and state transition perturbations that decay as $M_t = \bar{c} e^{-\bar{\lambda}t}$. Then, each step $k$ of the policy iteration algorithm is guaranteed to keep the value function of the policy non-decreasing everywhere in the state space except regions $\bar{s} \in \bar{\S}$ satisfying
    $$
        \left\vert \T V^{\pi_k}(\bar{s}) - V^{\pi_k}(\bar{s}) \right \vert < \frac{49}{8(1 - \gamma)^3} \left( \frac{\bar{c}}{\lambda} e^{-\bar{\lambda}T} + \frac{c_\alpha}{\lambda_\alpha} e^{-\lambda_\alpha T} + \frac{c_\beta b}{\sqrt{2 \pi} \lambda_\beta} e^{-\lambda_\beta T} \right).
    $$
    \label{corollary:nonstationary_policy_iteration_hawkes}
\end{corollary}
  
As the time window $T$ considered increases, the error reduces as $e^{-\left\{ \bar{\lambda}, \lambda_\alpha, \lambda_\beta \right\} T}$. Hence, depending on the rate of influence decay, increasing the time window beyond a certain point results in diminishing returns. The error also has a proportional dependence on $c_\alpha$, the base intensity of the Hawkes process, and an inverse dependency on $\lambda_\alpha$, the decay of the excitation function of the Hawkes process.

\paragraph{Proof of Theorem~\ref{thm:nonstationary_policy_iteration}}

Whether \eqref{eqn:nonstationary_improvement} or \eqref{eqn:nonstationary_small_bellman_error} holds
depends on the state $\bar{s}$ falls in the ``sub-optimality" set $B$ or not, which is the set of all states $\bar{s} = (s, x_{0:\infty}) \in \bar{\S}$ that satisfy
\begin{align*}
& \E_{\substack{s' \sim Q_{x_{0:T}, x'_{T+1:\infty}}(. | s,a) \\ x' \sim Q^X_{x_{0:T}, x'_{T+1:\infty}}(.) \\ x'_{T+1:\infty} \sim \mu_2 \\ a = \pi_k(\bar{s})}} \left[ r(s,a,s') + \gamma \hat{V}_k((s', x', x_{0:T-1})) \right] \\
& \qquad \qquad \qquad < \max_{a \in \A} \E_{\substack{s' \sim Q_{x_{0:T}, x'_{T+1:\infty}}(. | s,a) \\ x' \sim Q^X_{x_{0:T}, x'_{T+1:\infty}}(.) \\ x'_{T+1:\infty} \sim \mu_2 }} \left[ r(s,a,s') + \gamma \hat{V}_k((s', x', x_{0:T-1})) \right] - \delta
\end{align*}
for some suitable $\delta > 0$ that is yet to be chosen. It is to be noted that whether or not this condition holds, the above inequality or the corresponding equality holds for $\delta = 0$ by the definition of $\pi_{k+1}$.

Suppose that the above condition holds. Then, for $\epsilon = \frac{2}{(1 - \gamma)^2} \displaystyle \sum_{t = T+1}^\infty (M_t + N_t)$,

\begin{align*}
V^{\pi_k}((s, x_{0:\infty})) & \leq \E_{x'_{T+1:\infty} \sim \mu_2} V^{\pi_k}((s, x_{0:T}, x'_{T+1:\infty})) + \epsilon \\
	& = \E_{\substack{s' \sim Q_{x_{0:T}, x'_{T+1:\infty}}(. | s,a) \\ x' \sim Q^X_{x_{0:T}, x'_{T+1:\infty}}(.) \\ x'_{T+1:\infty} \sim \mu_2 \\ a = \pi_k(\bar{s})}} \left[ r(s,a,s') + \gamma V^{\pi_k}((s', x', x_{0:T}, x'_{T+1:\infty})) \right] + \epsilon \\
	& \leq \E_{\substack{s' \sim Q_{x_{0:T}, x'_{T+1:\infty}}(. | s,a) \\ x' \sim Q^X_{x_{0:T}, x'_{T+1:\infty}}(.) \\ x'_{T+1:\infty} \sim \mu_2 \\ a = \pi_k(\bar{s})}} \left[ r(s,a,s') + \gamma \hat{V}_k((s', x', x_{0:T-1})) \right]  
		+ \gamma \epsilon + \epsilon \\
	& < \E_{\substack{s' \sim Q_{x_{0:T}, x'_{T+1:\infty}}(. | s,a) \\ x' \sim Q^X_{x_{0:T}, x'_{T+1:\infty}}(.) \\ x'_{T+1:\infty} \sim \mu_2 \\ a = \pi_{k+1}(\bar{s})}} \left[ r(s,a,s') + \gamma \hat{V}_k((s', x', x_{0:T-1})) \right]  + \gamma \epsilon + \epsilon - \delta \\
	& \leq \E_{\substack{s' \sim Q_{x_{0:\infty}}(. | s,a) \\ x' \sim Q^X_{x_{0:\infty}}(.) \\ a = \pi_{k+1}(\bar{s})}} \left[ r(s,a,s') + \gamma \hat{V}_k((s', x', x_{0:T-1})) \right]  \\
		&  \qquad \qquad + \left( 1 + \frac{\gamma}{1 - \gamma} \right) \sum_{t=T+1}^\infty (M_t + N_t) + \gamma \epsilon + \epsilon - \delta \\
	& = \E_{\substack{s' \sim Q_{x_{0:\infty}}(. | s,a) \\ x' \sim Q^X_{x_{0:\infty}}(.) \\ a = \pi_{k+1}(\bar{s})}} r(s,a,s') + \gamma \E_{\substack{s' \sim Q_{x_{0:\infty}}(. | s,a) \\ x' \sim Q^X_{x_{0:\infty}}(.) \\ a = \pi_{k+1}(\bar{s})}} V^{\pi_k}((s', x', x_{0:\infty}))  \\
		&  \qquad \qquad + \frac{1}{1 - \gamma} \sum_{t=T+1}^\infty (M_t + N_t) + \gamma \epsilon +  \gamma \epsilon + \epsilon - \delta.
\end{align*}
That is, $V^{\pi_k}(\bar{s})$ can be bounded recursively in terms of the reward obtained using $\pi_{k+1}$ and $V^{\pi_k}(\bar{s}')$, where $\bar{s}'$ comes from the following policy $\pi_{k+1}$. This inequality can further be unrolled infinitely to obtain
\begin{align*}
V^{\pi_k}(\bar{s}) & \leq \E_{\pi_{k+1}} \sum_{t=0}^\infty \gamma^t r(s, a, s') - \delta 
	+ \frac{1}{1 - \gamma} \left[ \frac{1}{1 - \gamma} \sum_{t=T+1}^\infty (M_t + N_t) + (1 + 2 \gamma) \epsilon \right] \\
	& = V^{\pi_{k+1}}(\bar{s}),
\end{align*}
for $\delta = \displaystyle \frac{1}{(1 - \gamma)^2} \sum_{t=T+1}^\infty (M_t + N_t) + \frac{(1 + 2 \gamma)}{1 - \gamma} \epsilon$.
Now, suppose $\bar{s} \notin B$. We need to show that this implies (\ref{eqn:nonstationary_small_bellman_error}), that is, the Bellman error should be very small. It is always true that $\T V \geq V$. In the other direction, for any $\bar{s} = (s, x_{0:\infty}) \in \bar{\S}$,
\begin{align}
& \T V^{\pi_k}(\bar{s})  = \max_{a \in A} \E_{\substack{s' \sim Q_{x_{0:\infty}}(. | s,a) \\ x' \sim Q_{x_{0:\infty}}(.)}} \left[ r(s,a,s') + \gamma V^{\pi_k} \left( s', x', x_{0:\infty} \right) \right] \nonumber \\
	& \leq \max_{a \in A} \E_{\substack{s' \sim Q_{x_{0:\infty}}(. | s,a) \\ x' \sim Q_{x_{0:\infty}}(.)}} \left[ r(s,a,s') + \gamma \hat{V}_k \left( s', x', x_{0:T-1} \right) \right] + \gamma \epsilon \label{ineq:policy_iteration:proof:01} \\
	& \leq \max_{a \in A} \E_{\substack{s' \sim Q_{x_{0:T}, x'_{T+1:\infty}}(. | s,a) \\ x' \sim Q_{x_{0:T}, x'_{T+1:\infty}}(.) \\ x'_{T+1:\infty} \sim \mu_2}} \left[ r(s,a,s') + \gamma \hat{V}_k \left( s', x', x_{0:T-1} \right) \right] + \gamma \epsilon 
		+ \frac{1}{1 - \gamma} \sum_{t=T+1}^\infty (M_t + N_t) \nonumber \\
	& = \E_{\substack{s' \sim Q_{x_{0:T}, x'_{T+1:\infty}}(. | s,a) \\ x' \sim Q_{x_{0:T}, x'_{T+1:\infty}}(.) \\ x'_{T+1:\infty} \sim \mu_2 \\ a = \pi_{k+1}(\bar{s})}} \left[ r(s,a,s') + \gamma \hat{V}_k \left( s', x', x_{0:T-1} \right) \right] + \gamma \epsilon 
		 + \frac{1}{1 - \gamma} \sum_{t=T+1}^\infty (M_t + N_t) \label{ineq:policy_iteration:proof:02} \\
	& \leq \E_{\substack{s' \sim Q_{x_{0:T}, x'_{T+1:\infty}}(. | s,a) \\ x' \sim Q_{x_{0:T}, x'_{T+1:\infty}}(.) \\ x'_{T+1:\infty} \sim \mu_2 \\ a = \pi_{k}(\bar{s})}} \left[ r(s,a,s') + \gamma \hat{V}_k \left( s', x', x_{0:T-1} \right) \right] + \gamma \epsilon + \frac{1}{1 - \gamma} \sum_{t=T+1}^\infty (M_t + N_t) + \delta \nonumber \\
    & \qquad \qquad \qquad \qquad \qquad \qquad \qquad \qquad \qquad \qquad \qquad \qquad \qquad \qquad \qquad (\text{since } \bar{s} \notin B) \nonumber \\
	& = \E_{\substack{s' \sim Q_{x_{0:T}, x'_{T+1:\infty}}(. | s,a) \\ x' \sim Q_{x_{0:T}, x'_{T+1:\infty}}(.) \\ x'_{T+1:\infty} \sim \mu_2 \\ x''_{T:\infty} \sim \mu_1 \\ a = \pi_{k}(\bar{s})}} \left[ r(s,a,s') + \gamma V^{\pi_k} \left( s', x', x_{0:T-1}, x''_{T:\infty} \right) \right] + \gamma \epsilon \nonumber \\
        & \qquad \qquad \qquad \qquad \qquad \qquad + \frac{1}{1 - \gamma} \sum_{t=T+1}^\infty (M_t + N_t) + \delta \nonumber \\
	& \leq \E_{\substack{s' \sim Q_{x_{0:\infty}}(. | s,a) \\ x' \sim Q_{x_{0:\infty}}(.) \\ a = \pi_{k}(\bar{s})}} \left[ r(s,a,s') + \gamma V^{\pi_k} \left( s', x', x_{0:\infty} \right) \right] + 2 \gamma \epsilon 
		 + \frac{2}{1 - \gamma} \sum_{t=T+1}^\infty (M_t + N_t) + \delta \nonumber \\*
	& = V^{\pi_k}(\bar{s}) + \delta + 2 \gamma \epsilon + \frac{2}{1 - \gamma} \sum_{t=T+1}^\infty (M_t + N_t). \nonumber 
\end{align}
Inequalities~\eqref{ineq:policy_iteration:proof:01} and \eqref{ineq:policy_iteration:proof:02} are due to the definitions of  $\hat{V}_k$ and $\pi_{k+1}$ respectively.  Therefore,
\begin{align*}
\left\vert \T V^{\pi_k}(\bar{s}) - V^{\pi_k}(\bar{s}) \right\vert & \leq \delta + 2 \gamma \epsilon + \frac{2}{1 - \gamma} \sum_{t=T+1}^\infty (M_t + N_t) \\
	& = \frac{1}{(1 - \gamma)^2} \sum_{t=T+1}^\infty (M_t + N_t) + \frac{(1 + 2 \gamma)}{1 - \gamma} \epsilon + 2 \gamma \epsilon + \frac{2}{1 - \gamma} \sum_{t=T+1}^\infty (M_t + N_t) \\
	& = \sum_{t=T+1}^\infty (M_t + N_t) \left[ \frac{3 - 2 \gamma + 4 \gamma}{(1 - \gamma)^2} + \frac{2(1 + 2\gamma)}{(1 - \gamma)^3} \right] \\
	& = \sum_{t=T+1}^\infty (M_t + N_t) \left[ \frac{3 + 2 \gamma}{(1 - \gamma)^2} + \frac{2(1 + 2\gamma)}{(1 - \gamma)^3} \right] \\
    & \leq \frac{(5-2\gamma)(1+\gamma)}{(1 - \gamma)^3} \sum_{t=T+1}^\infty (M_t + N_t) \\
    & \leq \frac{49}{8(1 - \gamma)^3} \sum_{t=T+1}^\infty (M_t + N_t)
\end{align*} \hfill $\blacksquare$

\paragraph{Proof of Lemma~\ref{lemma:nonstationary_policy_improvement}}
\label{proof:lemma:nonstationary_policy_improvement}
Setting $\delta = 0$ in the first part of the above proof gives us the Lemma, for which there is no assumption on the extent of policy improvement.
\begin{align*}
V^{\pi_k}(\bar{s}) & \leq V^{\pi_{k+1}}(\bar{s}) + \frac{1}{1 - \gamma} \left[ \frac{1}{1 - \gamma} \sum_{t=T+1}^\infty (M_t + N_t) + (1 + 2 \gamma) \epsilon \right] \\
	& = V^{\pi_{k+1}}(\bar{s}) + \left[ \frac{1}{(1 - \gamma)^2} + \frac{2(1 + 2 \gamma)}{(1 - \gamma)^3} \right] \sum_{t=T+1}^\infty (M_t + N_t) \\
    & = V^{\pi_{k+1}}(\bar{s}) + \frac{3(1 + \gamma)}{(1 - \gamma)^3} \sum_{t=T+1}^\infty (M_t + N_t).
\end{align*} \hfill $\blacksquare$

In Theorem~\ref{thm:nonstationary_policy_iteration}, in regions of the augmented state space without guaranteed policy improvement, the Bellman error depends on the approximation error due to the ignoring old events. It is the cumulative effect of all events older than $T$ on the current events, as well as the state transition kernel. Specifically, it is proportional to
    $
     \sum_{t=T+1}^\infty \left( M_t + N_t \right),
    $
where $T$ is the time window beyond which events are discarded, $M_t$ is an upper bound on the total variation disturbance on the state transition kernel caused by an event older than $t$ time steps, and $N_t$ bounds the total variation effect due to old events on the event process itself. The faster the decay of the exogenous event influence, the smaller the truncation error due to nonstationarity.

\paragraph{Proof of Corollary~\ref{corollary:nonstationary_policy_iteration_hawkes}}
From~\eqref{eqn:nonstationary_tv_example}, the extra error introduced is proportional to
    \begin{align*}
        \sum_{t=T+1}^\infty \left( M_t + N_t \right) & \leq \sum_{t=T+1}^\infty \left( \bar{c} e^{-\bar{\lambda} t} + c_\alpha e^{-\lambda_\alpha t} + \mathsf{erf} \left( \frac{c_\beta b e^{-\lambda_\beta t}}{2 \sqrt{2}} \right) \right) \\
        & \leq \sum_{t=T+1}^\infty \left( \bar{c} e^{-\bar{\lambda} t} + c_\alpha e^{-\lambda_\alpha t} + \sqrt{1 - \exp\left( \frac{-c_\beta^2 b^2}{2 \pi} e^{-2 \lambda_\beta t} \right)} \right) \\
        & \leq \sum_{t=T+1}^\infty \left( \bar{c} e^{-\bar{\lambda} t} + c_\alpha e^{-\lambda_\alpha t} + \frac{c_\beta b}{\sqrt{2 \pi}} e^{-\lambda_\beta t} \right) \\
        & \leq \int_{t=T}^\infty \left( \bar{c} e^{-\bar{\lambda} t} + c_\alpha e^{-\lambda_\alpha t} + \frac{c_\beta b}{\sqrt{2 \pi}} e^{-\lambda_\beta t} \right) \d t \\
        & = \frac{\bar{c}}{\lambda} e^{-\bar{\lambda}T} + \frac{c_\alpha}{\lambda_\alpha} e^{-\lambda_\alpha T} + \frac{c_\beta b}{\sqrt{2 \pi} \lambda_\beta} e^{-\lambda_\beta T}. 
    \end{align*} \hfill $\blacksquare$

\section{Least-Squares Policy Iteration}
\label{section:lspi}
Although the algorithm presented in Section~\ref{section:policy_iteration} specifies the value function and its update, in practice, it is essential to note that, in practice, this function must be learned through samples acquired from the environment and is typically approximated using a selected class of functions. 

In this section, we analyze the sample complexity associated with policy iteration, in which the policy evaluation step is achieved by employing pathwise Least-Squares Temporal Difference (LSTD) learning ~\citep{Lazaric:2012:FiniteSampleAnalysisOfLSPI}. This method utilizes samples derived from a singular sample path generated by the policy.  We begin by establishing bounds for the evaluation error within the linear function space case, along with certain regularity conditions imposed on the MDP. Subsequently, we employ these results to bound the suboptimality of the least-squares policy iteration.

In contrast to \citet{Lazaric:2012:FiniteSampleAnalysisOfLSPI}, our study deals with an MDP characterized by an infinite-dimensional augmented state space, a stochastic reward function, and, more importantly, an additional error due to the use of tractable policies and value functions on a finite-dimensional domain. This results in additional sample complexity, even for evaluating a given policy, depending on the extent to which exogenous events affect the evolution of the states. In this section, we quantify the precise nature of these errors.

Consider a function class $\F$ consisting of functions that can approximate the value function on the augmented MDP, 
$\F \subset \left\{ f: \bar{\S} \rightarrow \R  \right\}$.
Since the domain of the functions in this class is infinite-dimensional, this can be further broken down into two approximations for dealing with these functions practically: a truncation operation to make the domain finite-dimensional, and then another class of functions that are then used to cover possible functions on this domain, i.e, 
\begin{displaymath}
	\F = \left\{ f \; o \; f^{(T)}_{\mathrm{trunc}} : f \in \F^{(T)} \right\}, \text{ where } f^{(T)}_{\mathrm{trunc}} : \bar{\S} \rightarrow \S \times \prod_{t=0}^T \X.
\end{displaymath}
Here, $f^{(T)}_{\mathrm{trunc}}$ is a fixed truncation function that removes extra events that are older than $T$ time steps while preserving the current state and new events, and $\F^{(T)}$ is a function class defined on this new truncated domain. For a fixed $T$, finding the best function in $\F$ is the same as finding the best function in $\F^{(T)}$. 

%\subsubsection*{Linear Function Approximation}

The standard approximating function space considered is an arbitrary finite-dimensional function space. Specifically, assume that there are $d$ basis functions $\varphi_i: \S \times \X^{T+1} \rightarrow \R$ that linearly span $\F^{(T)}$ and are bounded by $L$. That is, for each $f \in \F^{(T)}$, there exists $\alpha \in \R^d$ such that $f = \sum_{i \in [d]} \alpha_i \varphi_i$. These basis functions together form a feature representation function $\phi: \S \times \X^{T+1} \rightarrow \R^d$, defined by $\phi(x) = ( \varphi_1(x), \dots, \varphi_d(x) )$. Further, we define $\widetilde{\F}$ as the class of functions obtained by truncating the functions in $\F$ to be bounded by $L$.

We now consider the adaptation of the standard Least-Squares Policy Iteration to our setting. The policy evaluation step is described in Section~\ref{section:lstd}, wherein a value function is learned as a function of the current state and a finite history of events, with the previous event values set to zero for simplicity. In the policy improvement step, a greedy policy is defined as a function of the state and a finite history of events. This induces an additional approximation error in the model. The overall procedure is listed in  Algorithm~\ref{alg:nonstationary_least_squares_policy_iteration}.
\begin{algorithm}[h]
\begin{algorithmic}
        \State \textbf{Given:} Basis function set $\phi = \left( \varphi_1, \dots, \varphi_d \right): \S \times \X^{T+1} \rightarrow \R^d$
	\State Start with arbitrary deterministic policy $\pi_0: \S \times \X^{T+1} \rightarrow \A$\;
	\For{each $k \in \{0, 1, \dots \}$ till termination} 
		\State $//$ \texttt{Least-Squares Temporal Difference Learning for policy evaluation}
            \State Obtain samples $\bar{s}_1, \dots, \bar{s_n}$ and rewards $r_1, \dots, r_n$ using policy $\pi_k$
            \State Construct the feature vectors as
            \begin{equation}
                \Phi = \left( \phi\left( \bar{s}_1^{(T)} \right), \dots, \Phi \left( \bar{s}_n^{(T)} \right) \right).
            \end{equation}
            \State Obtain value function $\widetilde{V} = \textsf{Trunc} \left( \sum_{i \in [d]} \hat{\alpha}_i \varphi_i  \right)$, where
            \begin{equation*}
                \hat{\alpha} = \left[ \Phi^\transpose \left( I - \gamma \hat{P} \right) \right]^+ \Phi^\transpose r.
            \end{equation*}
  
		\State $//$ \texttt{Approximate Greedy Policy Improvement}
            \State Define new policy as the best approximating greedy policy that is a function of the state and just a finite history of events, as
            \begin{equation*}
			\pi_{k+1} \left( (s, x_{0:\infty}) \right) = \argmax_{a \in A} \; \E_{\substack{s' \sim Q_{x_{0:T}, (0)}(. | s,a) \\ x' \sim Q^X_{x_{0:T}, (0)}(.|s)}} \bigg[ r(s, a, s')  + \gamma \tilde{V}_k \left( \left( s', x', x_{0:T-1} \right) \right) \bigg].
		\end{equation*}
	\EndFor
\end{algorithmic}
\caption{Approximate Least-Squares Policy Iteration}
\label{alg:nonstationary_least_squares_policy_iteration}
\end{algorithm}

\subsection{Approximate Policy Evaluation}
\label{section:lstd}
The error between the true value function and the learned truncated value function can be broken down in terms of the errors introduced by truncating the infinite-dimensional domain and employing linear function approximation, as well as the stochasticity of the samples used to learn the approximate function. The aim is to provide a bound to 
$
	\Vert V - \widetilde{V} \Vert_\rho,
$
where $V$ is the true value function on $\bar{\S}$, $\widetilde{V}$ is the learned value function in $\F$, $\rho$ is some distribution over $\bar{\S}$, and the norm $\left\Vert \cdot \right\Vert_\rho$ is the $l^2(\rho)$-norm, which is the expected $l^2$ norm of the value of the function w.r.t the measure $\rho$, i.e,
$$
	\left\Vert f \right\Vert^2_\rho = \int_{\bar{\S}} f(x)^2 \; \rho( \d x ).
$$
While the $f^{(T)}_{\mathrm{trunc}}$ operator ignores old events, we also define a projection operator $\Pi^{\mathrm{trunc}}$ onto the function space defined on the truncated domain, as
$$
    \Pi^{\mathrm{trunc}}V = \argmin_{f: \S \times \prod_{t=0}^T \X \rightarrow \R} \left\Vert f - V \right\Vert_{\rho}.
$$
which finds the best approximation that does not depend on the old events, where `best' is defined in expectation with respect to $\rho$.
The best approximate value function $\widetilde{V}$ is that function in $\F$ (or equivalently $\F^{(T)}$) that minimizes the above-expected norm difference.

Suppose we are given $N$ samples $\bar{s}_1, \dots, \bar{s}_N$ from a Markov chain induced by policy $\pi$ in the augmented MDP $\M_X$, and the corresponding rewards $r_1, \dots, r_N$. We minimize the empirical norm difference at these points, defined as
$$
	\left\Vert f \right\Vert_N = \left( \frac{1}{N} \sum_{t=1}^N f(x_t)^2 \right)^{\frac{1}{2}}.
$$
The above function defines a norm, along with a corresponding inner product, in a new inner product space $\F_N$, which is the space of all ($N$) values of the functions at the given sample points. This can be considered a subset of $\R^N$ with an inner product. Here, $\F_N   = \{ ( f(\bar{s}_1), \dots, f(\bar{s}_N) ) : f \in \F \} 
		 = \{ ( f ( s^{(T)}_1 ), \dots, f( s^{(T)}_N ) ) : f \in \F^{(T)} \} 
         = \{ \Phi \alpha: \alpha \in \R^d \}$,  where 
         $\Phi = ( \phi ( \bar{s}_1^{(T)} ), \dots, \phi ( \bar{s}_N^{(T)} ) )_{N \times d}$  and  $\bar{s}_i^{(T)} = f^{(T)}_{\mathrm{trunc}} \left( \bar{s}_i \right)$.
In pathwise LSTD, given a sequence of states $\bar{s_1}, \dots, \bar{s_n}$ and corresponding rewards $r_1, \dots, r_n$ obtained by following policy $\pi$, the value function is approximated as
$$
    \hat{V} = \sum_{i \in [d]} \hat{\alpha}_i \varphi_i, \text{ for } \hat{\alpha} = \left[ \Phi^\transpose \left( I - \gamma \hat{P} \right) \right]^+ \Phi^\transpose r,
$$
where $\Phi = ( \phi ( \bar{s}_1^{(T)} ), \dots, \phi ( \bar{s}_N^{(T)} ) )_{N \times d}$ is the feature matrix,  $\bar{s}_i^{(T)} = f^{(T)}_{\text{trunc}}(\bar{s}_i)$, and $\hat{P}_{n \times n} = \left( \mathbb{I} \left\{ j = i+1 \right\} \right)_{i,j}$ is the pathwise empirical transition matrix.
Since the reward function takes values in $[0,1]$, the true value function is bounded by $1 / (1 - \gamma)$. Therefore, to obtain the best approximation, the above obtained $\widehat{V}$ can be truncated to take values in $[0, 1/(1-\gamma)]$ to obtain the final estimate $\widetilde{V}$.

\subsection{Sample complexity of Policy Evaluation}
%We break down the error between the true value function and the learned truncated value function in terms of the errors introduced by these two operations, as well as the stochasticity of the samples used to learn the approximate function. 
The aim is to provide a bound to the expected error
$
	\left\Vert V - \widetilde{V} \right\Vert_\rho,
$
where $V$ is the true value function on $\bar{\S}$, $\widetilde{V}$ is the learned value function in $\F$, $\rho$ is a distribution over $\bar{\S}$, and the norm $\left\Vert \cdot \right\Vert_\rho$ is the $l^2(\rho)$-norm.

To determine the expected error of the value estimates in the specified norm, the intermediate task is to bound the empirical error, which is the difference between the true solution and the estimate on the given data points that have been used to determine the estimate. We derive a bound on $\Vert V - \widehat{V} \Vert_N$ as a function of the number of data points and the complexity of the function class under consideration.

%\subsubsection{Empirical Error}

\begin{lemma}
Let $v = \left( V(\bar{s}_t) \right)_{t \in [N]}$ and $\hat{v} = \left( \hat{V}(\bar{s}_t) \right)_{t \in [N]}$. Then, with probability at least $1 - \delta$,
\begin{equation}\label{Eq:empError_indequality}
    \left\Vert v - \hat{v} \right\Vert_N \leq \frac{1}{\sqrt{1 - \gamma^2}} \left\Vert v - \hat{\Pi}v \right\Vert_N + \frac{L}{(1 - \gamma)^2} \sqrt{\frac{d}{\nu_N}} \left( \sqrt{\frac{2 \log(2d / \delta)}{N}} + \frac{1}{N} \right),
\end{equation}
where $N$ is the number of samples used, $\hat{\Pi}$ is the projection onto $\F_N$, $d$ is the dimensionality of the linear function space considered, $L$ is the upper bound of the basis functions, and $\nu_N$ is the smallest positive eigenvalue of the Gram matrix $\Phi^\transpose \Phi$.
\label{lemma:nonstationary_empirical_error_bound}
\end{lemma}

\begin{proof}
See Appendix~\ref{section:proofs:subsection:sample_complexity:subsubsection:empirical_error}.
\end{proof}

The inequality \eqref{Eq:empError_indequality} is almost identical to the one in \citet{Lazaric:2012:FiniteSampleAnalysisOfLSPI} because we are still operating in the range space of the basis functions restricted to a finite set of states; therefore, the structure of the MDP in our setting does not affect the analysis. Because the proof follows a similar path, we have included in the Appendix those steps that differ from the original proof, which are mainly due to the noise terms taken in the martingale difference sequence. In our setting, these noise terms also include stochasticity due to the reward, in addition to the transition function, leading to a difference in the final expression.

While this final inequality is quite similar, we shall see that when considering the expected error in the augmented state space, we can bring out the properties of our setting by splitting the expected error in terms of the error due to function truncation and the inherent error due to linear function approximation.

%\subsubsection{Expected Error}

The above inequality bounds the average error of the estimated value function, which is evaluated only at the points obtained as samples from the environment. It is desirable to study the error in the approximate value function learned as an expectation over the states with respect to a distribution $\rho$. Generally, $\rho$ is taken to be the stationary distribution of the Markov chain induced in the MDP by the policy $\pi$. To obtain such a result, additional assumptions are imposed on this Markov chain, such as the speed of convergence to its stationary distribution.

For linear function spaces, assuming that the policy induces a $\beta$-mixing Markov chain gives the following generalization bound for policy evaluation.

\begin{theorem}
    Assume that the policy $\pi$ induces on the MDP $\M_X$ a $\beta$-mixing Markov chain with parameters $\bar{\beta}$, $b$, $\kappa$ and stationary distribution $\rho$. Let $\bar{s}_1, \dots, \bar{s}_{N + \widetilde{N}}$ be a sample path obtained using the policy $\pi$. Suppose the first $\widetilde{N}$ samples are discarded for Markov chain burn-in, and the remaining $N$ samples are used to compute the truncated least-squares path-wise estimate $\widetilde{V}$ of the true value function $V$. Let $\nu$ be a lower bound on the eigenvalues of the sample-based Gram matrix that holds with a probability $1 - \delta/4$.
    Then, provided the number of discarded samples is $\tilde{N} = \left( \frac{1}{b} \log \frac{2e\bar{\beta}n}{\delta} \right)^{\frac{1}{\kappa}}$, with probability at least $1 - \delta$,

\begin{align*}
\left\Vert \widetilde{V} - V \right\Vert_\rho & \leq \frac{4 \sqrt{2}}{\sqrt{1 - \gamma^2}} \left( \frac{3}{(1 - \gamma^2)} \sum_{t = T+1}^\infty \left( M_t + N_t \right) + \left\Vert \Pi^{\mathrm{trunc}} V - \Pi V \right\Vert_\rho \right) \\*
				& \quad + \frac{2L}{(1 - \gamma)^2} \sqrt{\frac{d}{\nu}} \left( \sqrt{\frac{2 \log(8d / \delta)}{N}} + \frac{1}{N} \right) + \epsilon_1(N, \bar{\beta}, b, \kappa, d, \delta) + 2 \sqrt{2} \epsilon_2(N, \bar{\beta}, b, \kappa, \delta),
\end{align*}
\begin{align*}
    \text{where } \epsilon_1(N, d, \delta) & = \frac{24}{1 - \gamma} \sqrt{\frac{2 \Lambda_1(N, \bar{\beta}, d, \delta/4)}{N} \max \left\{ \frac{\Lambda_1(N, \bar{\beta}, d, \delta/4)}{b} , 1 \right\}^{\frac{1}{\kappa}}}, \\
    \epsilon_2(N, \delta) & = 12 \left( \frac{1}{1 - \gamma} + L \left\Vert \alpha^* \right\Vert \right) \sqrt{\frac{2 \Lambda_2(N, \bar{\beta}, \delta/4)}{N} \max \left\{ \frac{\Lambda_2(N, \bar{\beta}, \delta/4)}{b} , 1 \right\}^{\frac{1}{\kappa}}},
\end{align*}
    $V$ is the true value function, $\widetilde{V}$ is the learned value function clipped at $\frac{1}{1 - \gamma}$, $\Pi^{\mathrm{trunc}}V$ is the best possible value function on the truncated state space, $\Pi V$ is the best approximating value function in the linear function space considered, $M_t$ and $N_t$ are the upper bounds on the total variation due to exogenous events older than $t$ time steps induced in the transition dynamics and event mark distribution, respectively, $\Lambda_1(N, d, \delta)$ and $\Lambda_2(N, \delta)$ are as defined in Lemma~\ref{lemma:nonstationary_markov_chain_generalization} in Section~\ref{section:proofs:subsection:sample_complexity}, and $\alpha^*$ is such that $\Pi V = \sum_i \alpha^*_i \varphi_i$.
\label{thm:nonstationary_sample_complexity_evaluation}
\end{theorem}

This result breaks down the overall error into individual components, namely, the approximation error due to nonstationarity, the inherent error due to linear function approximation, the error due to the stochasticity of the samples that reduces with the number of samples, and other errors due to mixing of the Markov chain, the dimensionality of the function space, etc.

The effect of nonstationarity is reflected in the first term proportional to $\sum_{t} (M_t + N_t)$ for a general exogenous process whose events have an effect that decays as $M_t$ and $N_t$ on the state transition and next event distribution, respectively. For a specific temporal process, such as the discrete Hawkes process described in Corollary~\ref{corollary:nonstationary_policy_iteration_hawkes}, this term is replaced by its corresponding upper bound $\left( \frac{\bar{c}}{\lambda} e^{-\bar{\lambda}T} + \frac{c_\alpha}{\lambda_\alpha} e^{-\lambda_\alpha T} + \frac{c_\beta}{\sqrt{2 \pi} \lambda_\beta} e^{-\lambda_\beta T} \right)$ that depends on the parameters of the process.

The detailed background for the conditions on the MDP under which the results hold, the exact expressions for $\Lambda_1$ and $\Lambda_2$, and the supporting generalization Lemmas needed for the proof are provided in Section~\ref{section:proofs:subsection:sample_complexity}.

\paragraph{Proof of Theorem~\ref{thm:nonstationary_sample_complexity_evaluation}}
Following proof of Theorem $5$ in \citet{Lazaric:2012:FiniteSampleAnalysisOfLSPI}, 
	\begin{align}
		2 \left\Vert \hat{V} - V \right\Vert_N \geq 2 \left\Vert \widetilde{V} - V \right\Vert_N \geq \left\Vert \widetilde{V} - V \right\Vert_\rho - \epsilon_1, \label{ineq:sample_complexity_proof:000}
	\end{align}
	where the first inequality results from truncation, and the second inequality is a generalization Lemma for Markov chains stated in Section~\ref{section:proofs:subsection:sample_complexity}, and so
	\begin{align}
		\left\Vert \widetilde{V} - V \right\Vert_\rho & \leq 2 \left\Vert \hat{V} - V \right\Vert_N + \epsilon_1 \label{ineq:sample_complexity:proof:001}\\*
			& \leq \frac{2}{\sqrt{1 - \gamma^2}} \left\Vert V - \hat{\Pi} V \right\Vert_N + \frac{2L}{(1 - \gamma)^2} \sqrt{\frac{d}{\nu}} \left( \sqrt{\frac{2 \log(2d / \delta')}{N}} + \frac{1}{N} \right) + \epsilon_1 \label{ineq:sample_complexity:proof:002} \\
			& \leq \frac{2}{\sqrt{1 - \gamma^2}} \left\Vert V - \Pi V \right\Vert_N + \frac{2L}{(1 - \gamma)^2} \sqrt{\frac{d}{\nu}} \left( \sqrt{\frac{2 \log(2d / \delta')}{N}} + \frac{1}{N} \right) + \epsilon_1 \label{ineq:sample_complexity:proof:003} \\
			& \leq \frac{4 \sqrt{2}}{\sqrt{1 - \gamma^2}} \left\Vert V - \Pi V \right\Vert_\rho + \frac{2L}{(1 - \gamma)^2} \sqrt{\frac{d}{\nu}} \left( \sqrt{\frac{2 \log(2d / \delta')}{N}} + \frac{1}{N} \right) + \epsilon_1 + 2 \sqrt{2} \epsilon_2 \label{ineq:sample_complexity:proof:004} \\
			& \leq \frac{4 \sqrt{2}}{\sqrt{1 - \gamma^2}} \left( \left\Vert V - \Pi^{\mathrm{trunc}} V \right\Vert_\rho + \left\Vert \Pi^{\mathrm{trunc}} V - \Pi V \right\Vert_\rho \right) \nonumber \\ 
				& \qquad \qquad + \frac{2L}{(1 - \gamma)^2} \sqrt{\frac{d}{\nu}} \left( \sqrt{\frac{2 \log(2d / \delta')}{N}} + \frac{1}{N} \right) + \epsilon_1 + 2 \sqrt{2} \epsilon_2 \label{ineq:sample_complexity:proof:005} \\
			& \leq \frac{4 \sqrt{2}}{\sqrt{1 - \gamma^2}} \left( \left\Vert V -  V^\epsilon(s, x_{0:T}, \bar{0}) \right\Vert_\rho + \left\Vert \Pi^{\mathrm{trunc}} V - \Pi V \right\Vert_\rho \right) \nonumber \\
				& \qquad \qquad + \frac{2L}{(1 - \gamma)^2} \sqrt{\frac{d}{\nu}} \left( \sqrt{\frac{2 \log(2d / \delta')}{N}} + \frac{1}{N} \right) + \epsilon_1 + 2 \sqrt{2} \epsilon_2 \label{ineq:sample_complexity:proof:006} \\
			& \leq \frac{4 \sqrt{2}}{\sqrt{1 - \gamma^2}} \left( \left\Vert V -  V^\epsilon \right\Vert_\rho + \left\Vert V^\epsilon -  V^\epsilon(s, x_{0:T}, \bar{0}) \right\Vert_\rho + \left\Vert \Pi^{\mathrm{trunc}} V - \Pi V \right\Vert_\rho \right) \nonumber \\
				& \qquad \qquad + \frac{2L}{(1 - \gamma)^2} \sqrt{\frac{d}{\nu}} \left( \sqrt{\frac{2 \log(2d / \delta')}{N}} + \frac{1}{N} \right) + \epsilon_1 + 2 \sqrt{2} \epsilon_2 \label{ineq:sample_complexity:proof:007} \\
			& \leq \frac{4 \sqrt{2}}{\sqrt{1 - \gamma^2}} \left( \frac{3}{(1 - \gamma^2)} \sum_{t = T+1}^\infty \left( M_t + N_t \right) + \left\Vert \Pi^{\mathrm{trunc}} V - \Pi V \right\Vert_\rho \right) \nonumber \\
				& \qquad \qquad + \frac{2L}{(1 - \gamma)^2} \sqrt{\frac{d}{\nu}} \left( \sqrt{\frac{2 \log(2d / \delta')}{N}} + \frac{1}{N} \right) + \epsilon_1 + 2 \sqrt{2} \epsilon_2. \label{ineq:sample_complexity:proof:008}
	\end{align}

    Inequality~\eqref{ineq:sample_complexity:proof:002} is due to Lemma~\ref{lemma:nonstationary_empirical_error_bound}, \eqref{ineq:sample_complexity:proof:003} is by the definition of $\hat{\Pi}$, and \eqref{ineq:sample_complexity:proof:004} is the generalization bound similar to \eqref{ineq:sample_complexity_proof:000} in the opposite direction for upper bounding the empirical error in terms of the expected error. \eqref{ineq:sample_complexity:proof:005} is just the triangle inequality.
 
    \eqref{ineq:sample_complexity:proof:006} is by definition of the $\Pi^{\mathrm{trunc}}$ operator, where $V^\epsilon$ is the `truncated' value function of an $\epsilon$-suboptimal policy that depends only on the past $T$ events. This is guaranteed from the proof of Theorem~\ref{thm:nonstationary_optimality} for
    $\epsilon =  \frac{2}{(1 - \gamma)^2} \sum_{t = T+1}^\infty (M_t + N_t)$.
    
    \eqref{ineq:sample_complexity:proof:007} is due to the triangle inequality by adding and subtracting the actual value function $V^\epsilon$ of the $\epsilon$-suboptimal policy. The final inequality~\eqref{ineq:sample_complexity:proof:008} is due to Lemma~\ref{lemma:nonstationary_state_cropping} and the value of $\epsilon$.
	
    Both these generalization bounds happen with probability at least $1 - \delta'$ each. Furthermore, with the lower bound $\nu$ on $\nu_N$ holding with probability at least $1 - \delta'$, the final bound holds with probability at least $1 - \delta = 1 - 4 \delta'$.
    
    \hfill $\blacksquare$

The generalization terms due to linear function approximation using Markov samples remain essentially the same as in \citet{Lazaric:2012:FiniteSampleAnalysisOfLSPI}. The main difference is the first extra term in the error, which is induced by truncating the augmented state space by discarding the features of old events. The amount of error introduced owing to this approximation depends on the amount of influence exerted by old events on the current events and the state transition kernel.
    
%    \begin{remark}[On the proof]
    The results required for the proof can be taken from \citet{Lazaric:2012:FiniteSampleAnalysisOfLSPI} without any modifications because, even if some of the results, specifically the results on the bound on the covering numbers for linear function spaces taken from \citet{Gyorfi:2002:DistributionFreeTheoryOfNonparametricRegression}, require the domain of the basis functions to be finite-dimensional, the analysis takes place in the co-domain and can be directly applied to our setting. We state the complete theorem along with the Lemmas in Section~\ref{section:proofs:subsection:sample_complexity}.
  %  \end{remark}

\subsection{Sample Complexity of Least Squares Policy Iteration}

To analyze Algorithm~\ref{alg:nonstationary_policy_iteration}, we need to consider the following standard regularity conditions on the stationary distribution of greedy policies, future-state distribution of arbitrary sequences of policies, and linear independence of the features.

\noindent
\textbf{(C1)} (Lower-bounding distribution) There exists a distribution $\nu$ and corresponding constant $C_\nu < \infty$ such that for any policy $\pi$ that is greedy with respect to a function in the truncated space  $\widetilde{\F}$ , $\nu \leq C_\nu \rho^\pi$, where $\rho^\pi$ is the stationary distribution of policy $\pi$.

\noindent
\textbf{(C2)} (Discounted-average concentrability of future-state distribution~\citep{Antos:2008:LearningNearOptimalPoliciesBellmanResidualMinimization}) Given the target distribution $\rho$ and an arbitrary sequence of policies $\{\pi_m\}_{m\geq1}$, the concentrability coefficient
    \[
    c_{\rho, \nu}(m) = \sup_{\pi_1 \cdots \pi_m}\Bigg\|\frac{\d\nu P^{\pi_1}P^{\pi_2}...P^{\pi_m}}{d\rho}\Bigg\|_{\infty},
    \]
   and the second-order discounted-average concentrability of future-state distributions, as defined below, is finite.
    \[
    C_{\rho,\nu} = (1-\gamma)^2\sum_{m\geq1}m\gamma^{m-1}c_{\rho,\nu}(m).
    \]

\noindent    
\textbf{(C3)} (Linearly independent features) Let $\nu$ be the lower-bounding distribution from Assumption \textbf{(C1)}. The features $\phi(\cdot)$ of the function space $\F$ are linearly independent w.r.t. $\nu$, and the smallest eigenvalue $\omega_\nu$ of the Gram matrix $G_\nu \in \R^{dxd}$ w.r.t. $\nu$ is strictly positive.

\noindent
\textbf{(C4)} (Slower $\beta$-mixing process) There exists a stationary $\beta$-mixing process with parameters $\bar{\beta}, b, \kappa,$ such that for any policy $\pi$ that is greedy w.r.t. a function in the truncated space $\tilde{\F}^{(T)}$, it is slower than the stationary $\beta$-mixing process with stationary distribution $\rho^\pi$ (with parameters $\bar{\beta_\pi},b_\pi,\kappa_\pi$). This means that $\bar{\beta}$ is larger, and $b$ and $\kappa$ are smaller than their counterparts $\bar{\beta_\pi},b_\pi$ and $\kappa_\pi$.

\begin{theorem}
Under Assumptions \textbf{(C1-C4)}, the value function of policy $\pi_K$ obtained using Algorithm~\ref{alg:nonstationary_least_squares_policy_iteration} for $K$ iterations satisfies, with probability at least $1 - \delta$, 
\begin{align*}
    \left\Vert V^* - V^{\pi_K} \right\Vert_{\rho} &\leq \frac{\sqrt{3\gamma}}{(1-\gamma)^2}\sqrt{C_{\rho, \nu}}\Bigg[(1+\gamma)\sqrt{2C_\nu} \Bigg( \frac{4 \sqrt{2}}{\sqrt{1 - \gamma^2}} \left( \frac{3}{(1 - \gamma^2)} \sum_{t = T+1}^\infty \left( M_t + N_t \right) + \cE_0(\F)\right) \\*
				& \quad + \frac{2L}{(1 - \gamma)^2} \sqrt{\frac{d}{\nu}} \left( \sqrt{\frac{2 \log(8d / \delta)}{N}} + \frac{1}{N} \right) + \cE_1 + 2 \sqrt{2} \cE_2 \Bigg) + \frac{2}{1-\gamma}\sum_{t=T+1}^\infty (M_t + N_t)\Bigg] \\*
    & \qquad \qquad \quad + \frac{\sqrt{6}\gamma^{K/2}}{(1-\gamma)^2},
\end{align*} 
where,
\begin{compactitem}
    \item[-] $\cE_0(\F) = \displaystyle \sup_{\pi \in \G\left(\tilde{\F}\right)}\|\Pi^{\mathrm{trunc}}V^\pi - \Pi V^\pi\|_{\rho^\pi}$
    \item[-] $\cE_1$ is $\epsilon_1$ from Theorem~\ref{thm:nonstationary_sample_complexity_evaluation} written for the slower $\beta$-mixing process defined in Assumption 4,
    \item[-] $\cE_2$ is $\epsilon_2$ from Theorem~\ref{thm:nonstationary_sample_complexity_evaluation} written for the slower $\beta$-mixing process defined in Assumption 4, and $\|\alpha^*\|$ replaced by $\sqrt{\frac{C}{\omega_\mu}}\frac{1}{1-\gamma}$, and 
    \item[-] $\nu_\mu$ is $\nu$ from Lemma~\ref{lemma:smallest_eigenvalue_bound_sample_gram_matrix} in which $\omega$ is replaced by $\omega_\nu$ defined in Assumption 3, and the second
term is written for the slower $\beta$-mixing process defined in Assumption 4.
\end{compactitem}

\label{thm:nonstationary_sample_complexity_iteration}
\end{theorem}

The error due to the finite history approximation of nonstationarity induced by external influence is captured by the first term and the second-to-last term in the bound, that is, $\sum_{t=T+1}^\infty(M_t+N_t)$. For instance, when we consider the external process as a discrete-time Hawkes process (see Section~\ref{section:examples}), this additional error due to external influence can be written as
%\begin{displaymath}
$\left( \frac{\bar{c}}{\lambda} e^{-\bar{\lambda}T} + \frac{c_\alpha}{\lambda_\alpha} e^{-\lambda_\alpha T} + \frac{c_\beta}{\sqrt{2 \pi} \lambda_\beta} e^{-\lambda_\beta T} \right)$,
%\end{displaymath}
 One can notice that this term decays exponentially with $T$. 
 %indicating that we just need to consider a finite history of external events in the policy iteration algorithm to combat the effect of nonstationarity.

\begin{proof}{\textbf{of Theorem~\ref{thm:nonstationary_sample_complexity_iteration}}}
For the sequence $\tilde{V_k}$ of approximate value functions of the policies $\pi_k$ obtained by Algorithm~\ref{alg:nonstationary_least_squares_policy_iteration}, Lemma~\ref{lemma:munos_recursion} provides the following recursion for the suboptimality of the policies.
\begin{align*}
    V^* - V^{\pi_{k+1}} & \leq \gamma P^{\pi^*} (V^* - V^{\pi_k}) + \gamma E_k b_k + E_k' h_k, \\
    \text{where } E_k & = P^{\pi_{k+1}} \left( I - \gamma P^{\pi_{k+1}} \right)^{-1} - P^{\pi^*} \left( I - \gamma P^{\pi_k} \right)^{-1}, \\
    E_k' & = \gamma P^{\pi_{k+1}} \left( I - \gamma P^{\pi_{k+1}} \right)^{-1} + I, \\
    b_k &= \tilde{V}_k - T^{\pi_k} \tilde{V}_k, \\
    \text{and }h_k &= T^{\bar{\pi}_{k+1}} \tilde{V}_k - T^{\pi_{k+1}} \tilde{V}_k.
\end{align*}
Here, $P^\pi$ is an operator that provides the expected next-step value function when following policy $\pi$, and is defined as $P^\pi V (\bar{s}) = \E_{s' \sim Q(\bar{s}, \pi(s))} V(\bar{s}')$.

This bound on the difference between the optimal and current approximate value function has an extra term $E_k' h_k$ because our algorithm uses an approximate greedy policy that only depends on the current state and past $T$ events instead of the true greedy policy, which can be a function of an infinite history of events. This is reflected in $h_k$, which is the difference due to applying the $T^{\bar{\pi}_{k+1}}$ on the previous value function $\tilde{V}_k$ using true greedy policy $\bar{\pi}$, instead of $T^{\pi_{k+1}}$ with approximate greedy policy $\pi_{k+1}$.

By induction and then taking absolute value, we obtain
\begin{align*}
    \left\vert V^* - V^{\hat{\pi}_K} \right\vert & \leq \sum_{k=0}^{K-1}{(\gamma P^{\pi^*})^{K-k-1} (\gamma F_k |b_k| + F_k' |h_k|)} + (\gamma P^{\pi^*})^K \left\vert V^*-V^{\pi_0} \right\vert, \\
    \text{where } F_k & = P^{\pi_{k+1}} \left( I - \gamma P^{\pi_{k+1}} \right)^{-1} + P^{\pi^*} \left( I - \gamma P^{\pi_k} \right)^{-1} \\
    \text{and } F_k' & = \gamma P^{\pi_{k+1}} \left( I - \gamma P^{\pi_{k+1}} \right)^{-1} + I.
\end{align*}
Using the fact that $ V^* - V^{\pi_0} \leq \frac{2}{1-\gamma} R_{\text{max}}\mathbf{1}$, where $R_\text{max}$ is the maximum possible reward in any transition, one can rewrite the above bound as
\begin{align*}
    \left\vert V^* - V^{\pi_K} \right\vert \leq \frac{1+2\gamma + \gamma^K - 4\gamma^{K+1}}{(1-\gamma)^2} \left[ \sum_{k=0}^{K-1}{(\alpha_k A_k |b_k| + \beta_k B_k |h_k|)} + \alpha_K A_K R_{\mathrm{max}} \mathbf{1} \right],
\end{align*}
where $A_{k}$ and $B_{k}$ represent the operators defined as 
\begin{align*}
    A_k & = \begin{cases}
        \frac{1-\gamma}{2}(P^{\pi^*})^{K-k-1}F_k, & 0 \leq k < K \\
        (P^{\pi^*})^K & k = K.
    \end{cases} \\
    B_k & = (1-\gamma) (P^{\pi^*})^{K-k-1}F_k',
\end{align*}
and corresponding coefficients
\begin{align*}
    \alpha_k & = \begin{cases}
        \frac{2(1-\gamma) \gamma^{K-k}}{1+2\gamma + \gamma^K - 4\gamma^{K+1}}, & 0 \leq k < K, \\
        \frac{2(1-\gamma)\gamma^K}{1+2\gamma + \gamma^K - 4\gamma^{K+1}}, & k = K,
    \end{cases} \\
    \beta_k & = \frac{(1-\gamma) \gamma^{K-k-1}}{1+2\gamma + \gamma^K - 4\gamma^{K+1}}.
\end{align*}

The operators $A_k$ and $B_k$ are positive, that is, $A_k V \geq 0$ and $B_k V \geq 0$ whenever $V \geq 0$, and leave unity invariance, that is, $A_k\mathbf{1} = \mathbf{1}$ and $B_k\mathbf{1}=\mathbf{1}$,  and the corresponding coefficients sum to 1. Therefore, after raising both sides of the inequality to the power $p$ and integrating with respect to an arbitrary distribution $\rho$, we can use the Jensen inequality to obtain
\begin{align*}
\left\Vert V^* - V^{\pi_K} \right\Vert_{p,\rho}^p \leq \lambda_K .\; \rho\left[ \sum_{k=0}^{K-1}{(\alpha_k A_k |b_k|^p + \beta_k B_k |h_k|^p)} + \alpha_K A_K R_{\mathrm{max}}^p \mathbf{1} \right],
\end{align*}
where the $\rho \cdot$ operator gives the expectation with respect to $\rho$, i.e., $\rho V = \E_{\bar{s} \sim \rho} V(\bar{s})$, and $\lambda_K = \left( \frac{1+2\gamma+\gamma^K-4\gamma^{K+1}}{(1-\gamma)^2} \right)^p$.
\end{proof}
From definitions of coefficients $c_{\rho,\nu}(m)$, 
\begin{align*}
\rho A_k & \leq (1-\gamma)\sum_{m\geq0}{\gamma^m c_{\rho,\nu}(m+K-k)\nu}, \text{ and}\\
\rho B_k & \leq (1-\gamma) \sum_{m\geq0}{\gamma^m c_{\rho,\nu}(m+K-k-1)}\nu. 
\end{align*}
This gives us
\begin{align*}
\left\Vert V^* - V^{\pi_K} \right\Vert_{p,\rho}^p 
&\leq \lambda_K \Bigg[ 
    \sum_{k=0}^{K-1} \Bigg( 
        \alpha_k (1-\gamma) \sum_{m \geq 0} \gamma^m c_{\rho,\nu}(m + K - k) \|b_k\|_{p,\nu}^p \\
&\quad \quad \quad \quad + \beta_k (1-\gamma) \sum_{m \geq 0} \gamma^m c_{\rho,\nu}(m + K - k - 1) \|h_k\|_{p,\nu}^p 
    \Bigg) 
    + \alpha_K R_{\max}^p 
\Bigg].
\end{align*}
Let
\begin{align*}
\epsilon_1 & \stackrel{\text{def}}{=} \max_{ 0 \leq k < K}\|b_k\|_{p, \nu} = \max_{0 \leq k < K} \left\Vert \tilde{V}_k - T^{\pi_k} \tilde{V}_k \right\Vert_{p,\nu} ,
\text{ and } \\ \epsilon_2 & \stackrel{\text{def}}{=} \frac{2R_{\mathrm{max}}}{1-\gamma}\sum_{t=T+1}^\infty (M_t + N_t).
\end{align*}
Here, $\epsilon_1$ is the maximum Bellman error for all value function approximations over all iterations, and $\epsilon_2$ is the approximation error term due to the use of a truncated and finite representation of the augmented state space, which consists of just the current state and events in the past $T$ time steps. Using Lemma \ref{lemma:bound_h_k}, $|h_k|$ can be bounded and, one can write,
\begin{align*}
    \left\Vert V^*-V^{\pi_K} \right\Vert_{p, \rho}^{p} & \leq \lambda_K \Bigg[\frac{2 \gamma}{1+2\gamma+\gamma^K-4 \gamma^{K+1}} C_{\rho,\nu} \epsilon_1^p + \frac{\gamma}{1+2\gamma+\gamma^K-4 \gamma^{K+1}} C_{\rho,\nu} \epsilon_2^p \\*
    & \qquad \qquad \qquad \qquad \qquad \qquad \qquad \qquad \qquad + \frac{2 (1 - \gamma) \gamma^K}{1+2\gamma+\gamma^K-4 \gamma^{K+1}} R_{\mathrm{max}}^p \Bigg] \\
    & \leq \Bigg(\frac{1}{(1-\gamma)^2}\Bigg)^p \Bigg[\gamma (1+2\gamma + \gamma^K - 4\gamma^{K+1})^{p-1}C_{\rho,\nu}(2\epsilon_1^p + \epsilon_2^p) \\
    & \qquad \qquad \qquad \qquad \qquad \quad + 2(1-\gamma)\gamma^K(1+2\gamma + \gamma^K - 4\gamma^{K+1})^{p-1}R_{\mathrm{max}}^p\Bigg].
\end{align*}
Noting that $1+2\gamma + \gamma^K - 4\gamma^{K+1} < 3$, and considering the $l_2$ norm by setting $p = 2$, we obtain
\begin{align*}
    \left\Vert V^* - V^{\pi_K} \right\Vert_{\rho} & \leq \frac{\sqrt{3\gamma}}{(1-\gamma)^2}\sqrt{C_{\rho, \nu}}(\sqrt{2}\epsilon_1 + \epsilon_2) + \frac{\sqrt{6}\gamma^{K/2}}{(1-\gamma)^2}R_{\mathrm{max}} \\
    & = \frac{\sqrt{3\gamma}}{(1-\gamma)^2}\sqrt{C_{\rho, \nu}}\Bigg(\sqrt{2}\max_{0\leq k < K}\|V_k - T^{\hat{\pi}_k} V_k\|_{\nu} + \frac{2R_{\mathrm{max}}}{1-\gamma}\sum_{t=T+1}^\infty (M_t + N_t)\Bigg) \\*
    & \qquad \qquad \:\: + \frac{\sqrt{6}\gamma^{K/2}}{(1-\gamma)^2}R_{\mathrm{max}}.
\end{align*}
Based on Assumption~\textbf{(C1)} of $\nu$ being the lower bounding distribution of all $\rho_k$ with corresponding constant $C_\nu$, we have
\begin{align*}
    \left\Vert V^* - V^{\pi_K} \right\Vert_{\rho} &\leq \frac{\sqrt{3\gamma}}{(1-\gamma)^2}\sqrt{C_{\rho, \nu}}\Bigg(\sqrt{2C_\nu}\max_{0\leq k < K}\|V_k - T^{\pi_k} V_k\|_{\rho_k} + \frac{2R_{\text{max}}}{1-\gamma}\sum_{t=T+1}^\infty (M_t + N_t)\Bigg) \\*
    & \qquad \qquad \quad + \frac{\sqrt{6}\gamma^{K/2}}{(1-\gamma)^2}R_{\text{max}}.
\end{align*}
By applying Lemma~9 of \citet{Lazaric:2012:FiniteSampleAnalysisOfLSPI} to the approximate greedy policy $\pi_k$, one can bound the Bellman error using the evaluation error as
\begin{align*}
    \left\Vert V^* - V^{\pi_K} \right\Vert_{\rho} &\leq \frac{\sqrt{3\gamma}}{(1-\gamma)^2}\sqrt{C_{\rho, \nu}}\Bigg((1+\gamma)\sqrt{2C_\nu}\max_{0\leq k < K}\|V_k - V^{\pi_k}\|_{\rho_k} + \frac{2R_{\text{max}}}{1-\gamma}\sum_{t=T+1}^\infty (M_t + N_t)\Bigg) \\
    & \qquad \qquad \quad + \frac{\sqrt{6}\gamma^{K/2}}{(1-\gamma)^2}R_{\text{max}}.
\end{align*}
The first term, which is the maximum evaluation error over $K$ iterations, can be bounded using Theorem~\ref{thm:nonstationary_sample_complexity_evaluation} to obtain the final result.

\section{Experiments}
\label{section:experiments}
\subsection{Policy Evaluation}
We conducted experiments to analyze policy evaluation in a nonstationary variant of the classic \textsf{Pendulum-v1} environment within the OpenAI Gym control suite. The task involves applying torque to a pendulum to swing it and keep it upright. We utilized a fixed neural network policy that was partially pretrained using DDPG~\citep{Lillicrap:2016:DDPG}. Subsequently, we employed pathwise LSTD with linear function approximation to evaluate this policy, as described in Section~\ref{section:lspi}. The features considered are the standard cosine and sine of the angle and the angular velocity, with additional features being the angle itself, along with the squares of all these features, resulting in an
$8$-dimensional state.

To address nonstationarity, we consider the discrete-time Gaussian-marked Hawkes process outlined in Section~\ref{section:examples}. The intensity resulting from 
due to an event decays as $\alpha = e^{- \lambda_\alpha t }$ with $\lambda_\alpha = 1.0$, and the effect on the event mark decays as $\beta_t = 1/(1 + t^2)$. The events were added to the torque applied to the pendulum. Figure~\ref{fig:nonstationary_exps:lspi_pendulum} shows the expected error of the learned value function as a function of the number of samples, event horizon $T$, and rate $\lambda_\alpha$ of decay of the Hawkes process.

\begin{figure}
\centering
    \includegraphics[width=\textwidth]{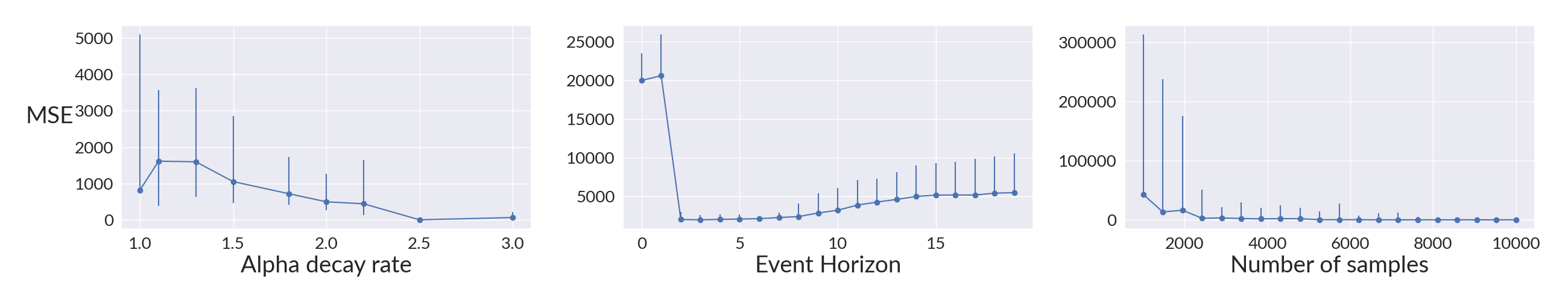}
    \caption{Performance of pathwise LSTD. The plots show the dependence of the Mean Squared Error of the value function learned using pathwise LSTD as a function of the rate of decay of the exogenous Hawkes process, the horizon of past events considered, and the number of samples used.}
    \label{fig:nonstationary_exps:lspi_pendulum}
\end{figure}

For a fixed event horizon $5$, the error decreases as the number of samples increases, which is intuitive. For a fixed number of samples $10,000$, the average error initially decreases as the event horizon increases, corresponding to the first term in the sample complexity of the total variation induced by events older than the event horizon. After a certain stage, increasing the event horizon results in a gradual increase in error. This is because events older than the horizon now contribute a negligible influence on the current dynamics, while the dimensionality of the features increases owing to the inclusion of more past events in the features, resulting in a slightly greater expected error.

The first plot shows the dependence on $\lambda_\alpha$, the rate of decay of $\alpha_t$. Faster decay results in less nonstationarity and a lower approximation error owing to the first term in the upper bound of the sample complexity.

We conducted $20$ trials and plotted the median of the Mean Squared Errors for the experiments with respect to the decay rate and number of samples. The error bars indicate the range of $40$-$60$ percentile versus the decay rate and $20$-$80$ percentile for the other two.

These results empirically demonstrate the sample complexity bound for policy evaluation with linear function approximation in Theorem~\ref{thm:nonstationary_sample_complexity_evaluation}. The average error decreases with increasing $N$ and decreasing $N_t$.

Increasing the event horizon $T$ increases $d$ while reducing $\sum_{t=T+1}^\infty \{ M_t, N_t \}$, resulting in a trade-off between the corresponding two terms in the bound, with the minimal error being achieved for $T=2$.

\begin{figure}[t]
    \centering
    \raisebox{8.8mm}{\includegraphics[width=0.39\linewidth]{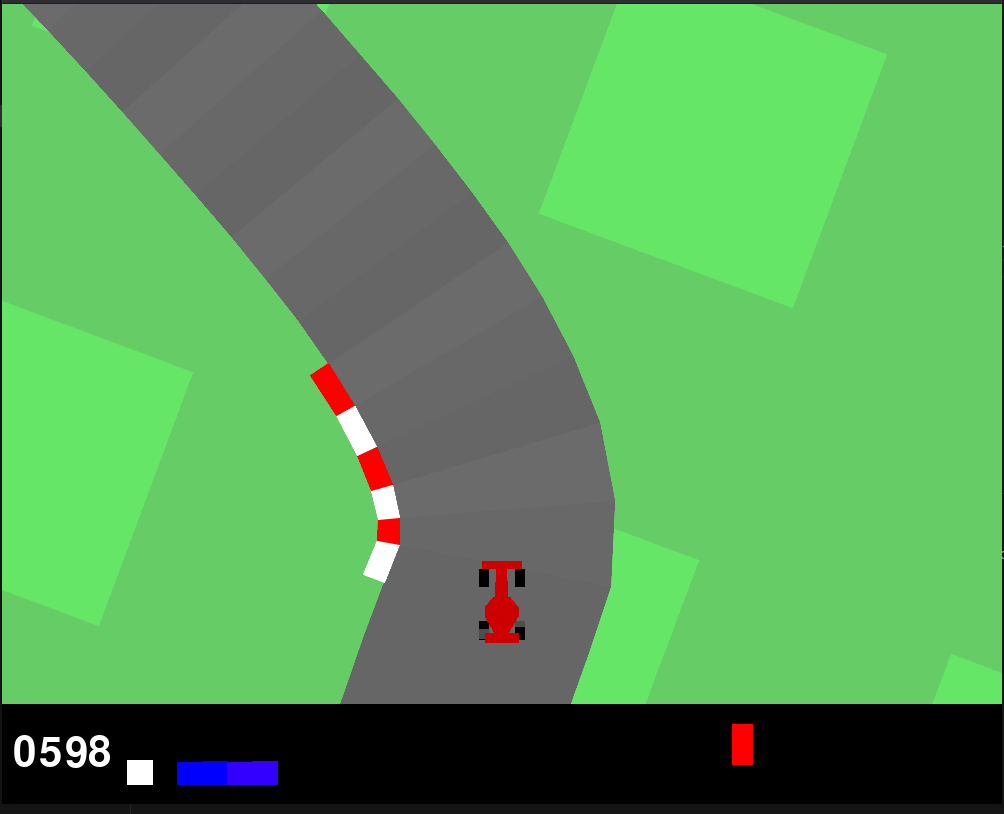}}
    \includegraphics[width=0.50\linewidth]{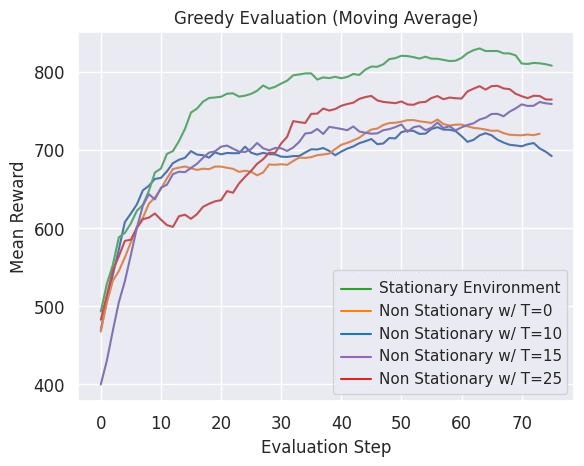}
    \caption{Left: Car racing environment in action. The friction between the road and car tires changes due to rain, which acts as an external event. The road friction will increase to the original value over time. Right: Policy improvement for nonstationary environment with different time horizons T.}
    \label{fig:nonstationary_exps:policy_iteration}
\end{figure}

%\subsubsection{Car Racing Experiments}

We conducted experiments for policy improvement using the car racing environment of the gymnasium library. We modified the environment to make it nonstationary. The external event in this case was the occurrence of rain, which reduced road friction. This reduction in road friction is temporary, and the friction increases exponentially over time back to the original value as the road dries. The agent should learn to account for the reduced friction, especially in the corners, and reduce its speed. Otherwise, the agent goes off the course. In a car racing scenario, where even a slight mistake will end up in losing the race, the agent needs to drive at the right speed to lead in the race while ensuring that the car does not skid. 

We trained the original environment with the maximum number of steps set to 800. The agent was trained almost perfectly, completing the lap without going off course. This score serves as the upper bound for the nonstationary case. Next, we trained an agent for the nonstationary case, where the agent had no information about external events. As expected, the agent's performance was not as good as that in the stationary case. Because the agent has no information about the external event, it has learned to navigate the course at a lower speed to avoid going off course. We then experimented by providing the agent with information regarding the past T events for T=10, 15, and 25. With the additional information of T, the agent was able to learn a better policy. As shown in Figure \ref{fig:nonstationary_exps:policy_iteration}, as we increase horizon T, the agent can learn a better policy. However, the effect of passing information related to external events decreases as $T$ increases, as the effect of older events diminishes exponentially.

\subsection{Policy Deployment}
In this paper, we considered a setting in which the environment containing an agent is perturbed owing to the presence of exogenous factors. Suppose a trained policy is available to the agent that is optimal in the absence of these external influences. Now, we wish to deploy this agent in a nonstationary environment where these external events affect the MDP dynamics. In this case, it would be more efficient to modify the existing policy to deal with nonstationarity directly than to train a new policy from the ground up.

\begin{figure}[t]
    \centering
    \raisebox{2.2mm}{\includegraphics[width=0.50\linewidth]{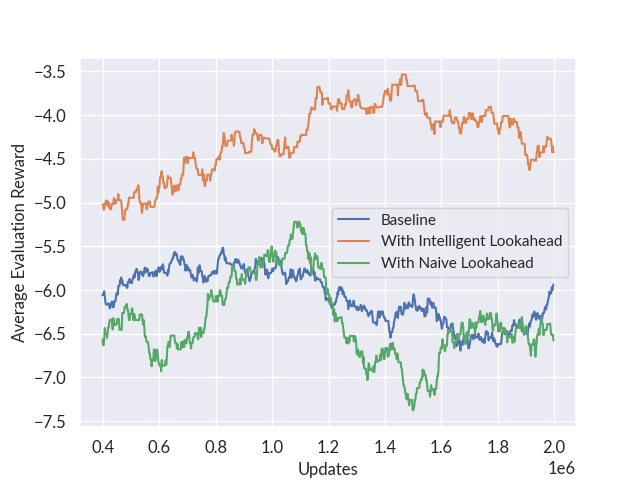}}
    \includegraphics[width=0.47\linewidth]{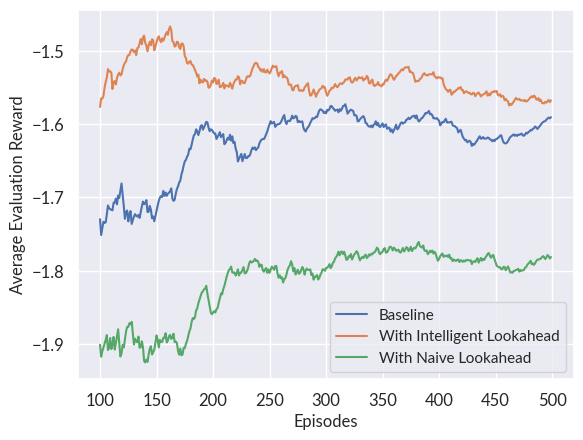}
    \caption{Results on nonstationary Pendulum and Point Maze environments. Learning state dynamics and exogenous events separately and planning intelligently outperforms a stationary policy or a model-based policy that tries to learn the dynamics model in the augmented state space directly.}
    \label{fig:nonstationary_exps:deployment}
\end{figure}

We consider a simple model-based planning strategy, wherein the agent learns the dynamics model and uses it to simulate a few trajectories based on the current policy. More precisely, given a pretrained policy $\pi$, when at an augmented state $\bar{s} \in \S$ during deployment, $d$ actions $a_1, \dots, a_d$ are sampled by perturbing the action $a = \pi(\bar{s})$. From each action $a_i$, a trajectory $(s, a_i, s_{i, 1}, a_{i,1}, \dots, s_{i,H}, a_{i,H})$ is obtained, and its corresponding sum of rewards $r_i = \sum_{t=1}^H r(s_{i,1}, a_{i,1})$ is calculated. The action $a_i$ that gives rise to the highest reward $r_i$ is chosen and taken in the environment. This process was repeated at each time step during the deployment.

This procedure can be performed in two different ways. A naive way to do this is to learn a dynamics model $\hat{T}: \bar{\S} \times \A \rightarrow \Delta \bar{\S}$ directly in the augmented state space and use it to simulate the trajectories. However, a more intelligent method is to learn the event process separately, and use it to simulate events $e_{i,t}$ that are fed into a dynamic model $\bar{T}: \bar{\S} \times \A \rightarrow \S$ that only predicts the states. Figure~\ref{fig:nonstationary_exps:deployment} depicts the results obtained using these two methods, along with those of a baseline that is the stationary policy learned in the stationary version of the environment. The first plot shows the results of the nonstationary pendulum task described previously. The second plot is for a nonstationary version of the Gymnasium Point Maze environment~\citep{Fu:2020:D4RL}, in which a 2DoF ball is force-actuated in the Cartesian $x$-$y$ directions to reach a randomly specified target goal in a closed, continuous 2D maze. Nonstationarity is induced by independent Hawkes processes, as described previously, that occur at $5$ randomly sampled and fixed points in the maze, each of which exerts a pulling force on the agent towards itself that is inversely proportional to the squared distance. From the plots, it is evident that an intelligent model-based look-ahead policy that mimics the structure of the nonstationarity performs better than naive baselines.

The algorithms theoretically analyzed in the previous sections are extensions of standard reinforcement learning algorithms that operate in the augmented state space. This is a valid strategy in many cases, as observed in the policy evaluation experiments described earlier. However, such a strategy ignores the structure of nonstationarity, sacrificing performance for generality. When there is additional knowledge on the nature of the exogenous event process, a valid question is whether using this knowledge helps solve the problem better. This experiment showed that, for the case of deploying a stationary policy in a nonstationary environment using model-based planning, considering the nature of the event process and incorporating it intelligently with current algorithms will lead to better results than ignoring the structure of the nonstationarity and solely operating in the augmented state space.

This discrepancy is due to the nature of the neural network function approximation primarily used in practical scenarios. The approximate policy improvement step in Algorithm~\ref{alg:nonstationary_policy_iteration} attempts to maximize the one-step returns obtained from each augmented state $\bar{s}$, which requires knowledge of the dynamics model. When this is unknown, an approximate model is used, which can be constructed in two ways, as described in the previous sections. Standard neural network dynamics models used in the literature are ill-equipped to model event processes, compelling one to model them separately from the original state $s \in \S$ when possible. Further research is needed on how to precisely model and incorporate event processes in reinforcement learning policies in a way that utilizes the structure of nonstationarity and the nature of the induced perturbations.

%======================
\section{Related Work}
\label{section:related_work}
Numerous prior studies addressing nonstationary RL problems predominantly offer practical solutions that lack theoretical backing. In this context, we refrain from discussing such works and instead aim to position our research within the framework of existing theoretical findings. 

\citet{Lecarpentier:2019:NonStationaryMDPsAWorstCaseApproachUsingModelBasedRL} considers a finite horizon MDP where the transition dynamics and reward function change in a manner that is Lipschitz continuous over time. \citet{Cheung:2020:RLForNonStationaryMDPsTheBlessingOfMoreOptimism} analyzes a similar scenario within a finite MDP, where the total variation in the environment, termed the variation budget, is known to the agent. This work proposes an optimistic value iteration-based algorithm that guarantees an upper bound on its dynamic regret.
\citet{Wei:2021:NonStationaryRLWithoutPriorKnowledgeOptimalBlackBoxApproach} introduces a black-box reduction that transforms any appropriate algorithm with optimal performance in a near-stationary environment into an algorithm that achieves optimal dynamic regret in a nonstationary environment.
\citet{Guo:2024:OnAverageOptimalityofMDPsinBorelSpaces} investigates average reward nonstationary MDPs in Borel spaces and proposes a rolling-horizon algorithm to obtain arbitrarily almost-optimal solutions based on a suitably changing horizon. At each stage, a finite horizon problem starting from that stage is solved to obtain the decision rule. In contrast, our algorithm considers a horizon of past external events instead of choosing the current action. Furthermore, along with standard regularity conditions of compactness, continuity, etc., Condition 1 in \citet{Guo:2024:OnAverageOptimalityofMDPsinBorelSpaces} contains global conditions on the sequence of transition kernels at each stage of the MDP to guarantee the existence of optimal policies and solutions to the Average Optimality Equation. In contrast, our assumptions (A2) and (A3) are more local in nature, with bounds on the perturbations of the transition kernel.

\citet{Hadoux:2014:SolvingHiddenSemiMarkovModeMDPs} introduced a class of piecewise stationary MDPs known as Hidden-Semi-Markov-Model MDPs, characterized by semi-Markov transitions between hidden stationary MDP models. To solve this problem, this work adapts the partially observable Monte Carlo planning algorithm used for solving partially observable MDPs.

%Certain algorithms presuppose an additional structure on the nature of nonstationarity. 
\citet{Feng:2022:FactoredAdaptationForNonStationaryRL} propose factored nonstationary MDPs, where the transition dynamics are defined in terms of a dynamic Bayesian network over the states, actions, rewards, and latent change factors that induce nonstationarity. Here, both the states and the latent change factors have transition dynamics obeying this causal graph, and the latent factors evolve in a Markovian fashion. This work suggests employing variational autoencoders to learn the transition dynamics.  
%They proposed using a Variational Auto-Encoder (VAE) framework to learn transition dynamics and presented the results in MuJoCo and robotic manipulation tasks.

A factorization setting called exogenous MDP is presented in ~\citep{Efroni:2022:SampleEfficientRLExogenousInformation},  wherein the state is divided into two parts: an endogenous state, which is controllable by the agent's actions, and an exogenous state, which evolves on its own and does not affect the agent's reward. The precise manner in which this division exists is unknown to the agent. While this may appear similar to our interpretation of states and external events, in our scenario, external events affect the dynamics of the state and, consequently, the rewards received by the agent. It is important to note that this work only considered the tabular setting and proposed algorithms that can only deal with finite horizon episodes.
A similar decomposition of the state space into endogenous and exogenous subspaces was studied by \citet{Dietterich:2018:DiscoveringAndRemovingExoStatesAndRewardsForRL}. In this study, it was posited that the exogenous states evolve in a Markov fashion and that the reward function can be additively decomposed into exogenous and endogenous rewards. This study proposed algorithms to determine the exogenous component of the state as projections that maximize the partial correlation coefficient as a proxy for the mutual information between the exogenous next state and the endogenous current state and action conditioned on the exogenous current state.
%-------------------------------

While previous studies, such as \citet{Efroni:2022:SampleEfficientRLExogenousInformation,Dietterich:2018:DiscoveringAndRemovingExoStatesAndRewardsForRL}, focused on the decomposition of states into endogenous and exogenous components, our work addresses a fundamentally different problem: the impact of integrating exogenous event information into reinforcement learning. 

Changing dynamics of environments based on externally specified ``contexts'' have also been studied under Contextual MDPs~\citep{Hallak:2015:ContextualMDPs}. However, these studies considered contexts that are constant for each episode and have a finite number of possible values. The aforementioned study proposed the CECE algorithm, which learns a set of policies corresponding to all contexts and determines the latent context for an episode (if unknown) to choose the policy corresponding to the current context.

More specifically, the CECE clusters an initial set of trajectories into groups corresponding to a fixed set of latent contexts. For each new episode, a partial realization is used to classify it into one of these clusters, and a model learned on that cluster is used to exploit the rest of the trajectory. This algorithm cannot handle the setting considered in this study because each new time step results in a change in the context, and knowledge of a latent context for the current time step does not guarantee the same context for the rest of the episode, except in a trivial special case.

\citet{Modi:2018:MDPsWithContinuousSideInformation} consider a relatively closer setting to ours, where the context is known and possibly continuous, with their algorithm being more amenable to being adapted to handle dynamic contexts. However, it works by keeping count of the number of occurrences of $(s,a)$ in each of a set of representative contexts to learn a set of explicit models, which is only possible in the setting of finite states and action spaces.

\citet{Tennenholtz:2023:RLwithHistoryDependentDynamicContexts} extend Contextual MDPs to handle dynamic contexts that are known to the agent, change at each time step, and influence the state transition distribution. However, their work differs from ours in several significant aspects. This study considers the contexts at each time step to depend on the history of states, actions, and contexts, whereas the next state depends only on the current state, action, and context. This means that causality exists in both directions, from states to contexts and vice versa. In contrast, in our work, the contexts are exogenous events that cannot be controlled by the agent, and the causality flows only in one direction, from contexts to states. Furthermore, their work considers finite state spaces, with the analysis explicitly geared towards such spaces and bounds that depend on the size of the state and action sets.

In \citet{Tennenholtz:2023:RLwithHistoryDependentDynamicContexts}, the context space is a fixed finite set of $M$ vectors, and a latent map is learned from the (state, action, context) space to $\mathbb{R}^M$, with the range of the map still being a finite, albeit exponentially increasing, set. This study also considers a finite-horizon MDP. This relatively simple setting allows one to tractably find the latent map and learn an optimal policy. In this sense, our setting is much more general and challenging.
In addition, the work of \citet{Tennenholtz:2023:RLwithHistoryDependentDynamicContexts} presents regret analysis, whereas our study focuses on the convergence results of algorithms.

Our analysis addresses the nonstationarity induced in the MDP by the influence of non-Markovian events, necessitating the learning of policies that are contingent upon both the current state and a history of prior events. Although this approach bears resemblance to the strategies employed in Partially Observable MDPs (POMDPs)~\citep{Sunberg:2018:OnlineAlgosforPOMDPsWithContinuousStateActionObservationSpaces}, the underlying rationale diverges significantly.
In POMDPs, histories of observations are utilized despite the Markovian nature of state transition and observation functions, primarily to mitigate an agent's inability to perceive the true state. Conversely, in our framework, both the state and external events are fully observable, and the necessity for an augmented space that incorporates event histories arises from the intrinsic non-Markovian characteristics of a nonstationary environment.
Moreover, the causal framework of our scenario is distinctly different. The events exhibit non-Markovian properties, and the state at any given time directly affects the distribution of the subsequent state, regardless of the knowledge of an event. In POMDPs, while beliefs and sufficient statistics are contingent upon a history of observations, the states themselves maintain a Markovian nature, leading to observations that are causally independent.

Our work makes significant inroads into understanding how the characteristics of exogenous event processes affect the tractability of the problem and the convergence and sample complexity of RL algorithms. To the best of our knowledge, no existing studies in the literature offer these findings.

\section{Outlook}

This study lays the groundwork for understanding sequential decision-making problems under the influence of external temporal events, offering several theoretical insights within the reinforcement learning framework. The results are established in a general setting that relies on learning functions in an augmented state space. The challenge here is that the augmented state space can be exponentially larger due to the long-term dependencies of external events; hence, it is essential to develop provable approximate methods to mitigate this. 

The ``factorization" of the environment into Markovian states and non-Markovian events that impact these states, compressing the histories of events into latent features that are independent of the states, may provide a strategic advantage. This approach could facilitate the learning of representations of the expanded states by developing a distinct event distribution model that can be scaled up, as it does not rely on the agent, along with a separate state transition model that necessitates agent interaction with the environment. A compressed representation of events derived from extensive event data can be used to learn a feasible state transition kernel.
This can be achieved either implicitly through function approximators, such as recurrent neural networks (or LSTMs), or explicitly by learning compressed state representations using mutual information bottlenecks. Although these are practical methods, the theoretical analysis of such learned compressed state representations remains an open problem.

In this work, we used an augmented state representation obtained by considering the current state and a fixed time horizon $T$ of past events. The choice of $T$ is a hyperparameter that controls the extent to which information about the environment is preserved, with larger values of $T$ resulting in better approximation and thereby better performance of the corresponding algorithms, as described in Theorems~\ref{thm:nonstationary_policy_iteration}, \ref{thm:nonstationary_sample_complexity_evaluation} and \ref{thm:nonstationary_sample_complexity_iteration}. Although this is a fixed choice made by the algorithm designer, the next logical step would be to learn the optimal value of $T$ using the data collected from the environment and adapt it based on the performance of the agent. We leave this as a future research problem.

%\textbf{Generalization.} 
The analysis of sample complexity in
Section~\ref{section:lspi} hinges on generalizing the sample error of policy evaluation to an expected error over the entire state space. Such generalizations typically necessitate assumptions regarding the distribution of states. In our results, the assumption pertains to the rate of convergence of the Markov chain induced from the augmented MDP by the policy under evaluation, a notion frequently encountered in the reinforcement learning literature. However, in our case, the specific structure of the state space, specifically, the separation of the augmented state space into states and events with different characteristics, may necessitate alternate assumptions. For instance, while the recurrence and aperiodicity of the Markov chain may not hold, it may be reasonable to assume stationarity solely for the event sequences. 
Therefore, the generalizability of the value function in our setting can be analyzed in a more specific way that considers this exogenous structure.

%\textbf{Policy Improvement.} 

The policy improvement step involves optimizing the action space in accordance with the expected one-step reward and value function for the subsequent time step, necessitating an understanding of the model structure. Therefore, when faced with an unknown transition and reward model, it is crucial to explore the sample complexity associated with learning these models from samples and the impact this has on the effectiveness of the resulting policy.

% Acknowledgements and Disclosure of Funding should go at the end, before appendices and references

%\acks{Acknowledgements come here.}

% Manual newpage inserted to improve layout of sample file - not
% needed in general before appendices/bibliography.

%\newpage

\appendix

\section{List of notations and abbreviations}

Standard mathematical notations are used.

\begin{longtable}{p{2.5cm}p{12cm}}
    \toprule
    \textbf{Notation} & \textbf{Description} \\
    \midrule
    \midrule
    % $\bI\{\cdot\}$ & Indicator function: $\bI\{p\}$ is 1 when proposition $p$ is true, 0 otherwise. \\
    $\N$ & Set of natural numbers, $\{1, 2, 3, \dots\}$. \\
    $\R$ & Set of real numbers. \\
    % $\mathcal{X} \backslash \mathcal{Y}$ & Set difference: Set of elements of $\mathcal{X}$ that are not in set $\mathcal{Y}$. \\
    $\abs{\mathcal{X}}$ & Cardinality of set $\mathcal{X}$. \\
    $\inf$ & Infimum of a given set \\
    $\sup$ & Supremum of a given set \\
    $\norm{\cdot}_2$ & $l_2$ norm of a vector. \\
    $[N]$ & The set $\{ 1, 2, \dots, N \}$ for some $N \in \N$. \\
    ~ & ~ \\
    $\E [ \cdot ]$ & Expectation with respect to a certain distribution. \\
    $\P( \cdot )$ & Probability of a given event with respect to some distribution. \\
    $\Delta ( \cdot )$ & Class of probability distributions over a given set \\
    % $\oplus$ & Binary sum of two probability distributions i.e, the distribution arising from adding two independent random variables sampled respectively from two probability distributions \\
    % $\bigoplus$ & N-ary sum of probability distributions \\
    $\cN(\mu, \sigma^2)$ & Normal distribution with mean $\mu$ and variance $\sigma^2$ \\
    % $\Phi$ & Cumulative distribution function of $\cN(0, 1)$ \\
    % $\varphi$ & Probability density function of $\cN(0, 1)$ \\
    % $\chi^2$ & Chi-squared distribution \\
    % ~ & ~ \\
    % $\mathsf{Mean} ( \cdot )$ & Mean of a given distribution \\
    % $\mathsf{Variance} ( \cdot )$ & Variance of a given distribution \\
    % $\mathsf{Supp} ( \cdot )$ & Support of a given distribution \\
    % $\mathsf{VaR}_\alpha ( \cdot )$ & Value-at-Risk of given distribution at risk level $\alpha$ \\
    % $\mathsf{CVaR}_\alpha ( \cdot )$ & Conditional Value-at-Risk of given distribution at risk level $\alpha$ \\
    \bottomrule
\end{longtable}

Notation related to reinforcement learning.

\begin{longtable}{p{2.5cm}p{12cm}}
    \toprule
    \textbf{Notation} & \textbf{Description} \\
    \midrule
    \midrule
    $\M$ & A Markov Decision Process (MDP) \\
    $\S$ & State space of an MDP \\
    $\A$ & Action space of an MDP \\
    $r$ & Reward function \\
    $Q$ & Transition kernel of an MDP\\
    $\gamma$ & Discount factor \\
    $\pi$ & A policy that acts in an MDP \\
    $\pi^*$ & Optimal policy \\
    $V^\pi$ & Value function of policy $\pi$ \\
    $V^*$ & Optimal value function \\
    $Q^\pi$ &  Action-value function of policy $\pi$ \\
    $Q^*$ & Optimal Q function \\
    $\T$ & Bellman operator \\
    \bottomrule
\end{longtable}

% \clearpage
Abbreviations.

\begin{longtable}{p{3.5cm}p{11cm}}
    \toprule
    \textbf{Notation} & \textbf{Description} \\
    \midrule
    \midrule
    % MAB & Multi-Armed Bandit \\
    MDP & Markov Decision Process \\
    % P-MDP & Pessimistic MDP \\
    % D4RL & Datasets for Deep Data-Driven Reinforcement Learning \\
    PPO & Proximal Policy Optimization \\
    DOF & Degrees Of Freedom \\
    % AC & Active Collection \\
    % FT & Finetuning \\
    ~ & ~ \\
    % \textbf{\small Bandit algorithms} & ~ \\
    % UCB & Upper Confidence Bound \\
    % CUCB & Combinatorial Upper Confidence Bound \\
    % MARAB & Multi-Armed Risk-Aware Bandit \\
    % SDCB & Stochastically Dominant Confidence Bound \\
    % ~ & ~ \\
    % \textbf{\small RL algorithms} & ~ \\
    % MOReL & Model-Based Offline Reinforcement Learning \\
    % ActiveORL & Active Offline Reinforcement Learning \\
    % OC & Option-Critic \\
    % MOC & Multi-Updates Option Critic \\
    % CVAE & Conditional Variational Auto-Encoder \\
    % SAC & Soft Actor-Critic \\
    % HER & Hindsight Experience Replay \\
    % BC & Behavior Cloning \\
    % UOF & Universal Option Framework \\
    % DDPG & Deep Deterministic Policy Gradient \\
    % TD3 & Twin-Delayed Deep Deterministic Policy Gradient \\
    % CQL & Conservative Q-Learning \\
    % IQL & Implicit Q-Learning \\
    % BPPO & Behavior Proximal Policy Optimization \\
    \bottomrule
\end{longtable}

%\section{Details of Lemmas and Proofs}
%\label{section:proofs}

\section{Proof of inequality \ref{eqn:nonstationary_tv_example}}
\label{Appendix:TV-erf-calculation}
First we show that $\mathsf{TV} \left( \mathcal{N}(\mu_1, 1), \mathcal{N}(\mu_2, 1) \right) = \erf\left(\frac{\lvert\mu_2-\mu_1\rvert}{2\sqrt{2}}\right)$. 
Without loss of generality, we assume $\mu_2 \geq \mu_1$. Now,
\begin{align}
    \mathsf{TV} \left( \mathcal{N}(\mu_1, 1), \mathcal{N}(\mu_2, 1) \right) & = \frac{1}{2} \int_{-\infty}^{\infty}{\Bigg| \frac{1}{\sqrt{2\pi}} \exp\!\left( -\frac{(x-\mu_1)^2}{2} \right) - \frac{1}{\sqrt{2\pi}} \exp\!\left( -\frac{(x-\mu_2)^2}{2} \right) \Bigg|\d x} \nonumber \\
    & = \int_{\frac{\mu_1+\mu_2}{2}}^{\infty}{\frac{1}{\sqrt{2\pi}}\bigg( \exp\!\left( -\frac{(x-\mu_2)^2}{2} \right) - \frac{1}{\sqrt{2\pi}} \exp\!\left( -\frac{(x-\mu_1)^2}{2} \right) \bigg) \d x} \nonumber \\
    & = \frac{1}{\sqrt{2\pi}}\Bigg(\int_{\frac{\mu_1-\mu_2}{2\sqrt{2}}}^{\infty}{\sqrt{2}\exp(-y^2) \d y} - \int_{\frac{\mu_2-\mu_1}{2\sqrt{2}}}^{\infty}{\sqrt{2}\exp(-z^2) \d z}\Bigg) \nonumber \\
    & = \frac{2\sqrt{2}}{\sqrt{2\pi}} \int_{0}^{\frac{\mu_2-\mu_1}{2\sqrt{2}}}{\exp(-y^2) \d y} = \erf \bigg(\frac{|\mu_1-\mu_2|}{2\sqrt{2}}\bigg)
    \label{eqn:TV_dist_normal_dist}
\end{align}

Now, we proceed to show the steps involved in establishing \eqref{eqn:nonstationary_tv_example}. Given two historical sequences $(X_{t'})_{t' < t}$ and $(X'_{t'})_{t' < t}$ that only differ by one event at some time $t' \leq t - T$ with $B'_{t'} = X'_{t'} = 0$ and $B_{t'} = 1$, $X_{t'} = x$, the difference in intensities is $p_t - p'_t = \alpha_{t - t'}$

Let $q_1$ and $q_2$ be the density function of $X_t$ and $X'_t$
\begin{align}
    \mathsf{TV} \left( X_t, X'_t \right) & = \frac{1}{2} \int_{-\infty}^{\infty}{\lvert {q_1(x) - q_2(x)} \rvert \d x} \nonumber \\
    &= \frac{1}{2} \int_{-\infty}^{\infty} 
   \Big|\, q_1(x \mid E_t=0)\Pr(E_t=0) 
          + q_1(x \mid E_t=1)\Pr(E_t=1) \nonumber \\ 
    &\qquad\quad 
          - q_2(x \mid E'_t=0)\Pr(E'_t=0) 
          - q_2(x \mid E'_t=1)\Pr(E'_t=1) 
   \,\Big| \, dx \nonumber \\
   & \leq \frac{1}{2} \int_{-\infty}^{\infty}{\lvert q_1(x|E_t=0)(1-p_t) - q_2(x|E'_t=0)(1-p'_t) \rvert \d x} \nonumber \\
   & \quad \frac{1}{2} \int_{-\infty}^{\infty}{\lvert q_1(x|E_t=1)p_t - q_2(x|E'_t=1)p'_t \rvert} \d x \nonumber \\
   & \leq \frac{1}{2}\int_{-\infty}^{\infty}{\lvert \delta(x)(1-p_t) - \delta(x)(1-p't) \rvert \d x} \nonumber \\
   & \quad + \frac{p_t}{2} \int_{-\infty}^{\infty}{\lvert q_1(x|E_t=1) - q_2(x|E'_t=1) \rvert \d x} \nonumber \\
   & = \frac{\lvert p'_t - p_t \rvert}{2} \int_{-\infty}^{\infty}{\delta(x) \d x} + p_t \; \mathsf{TV} \left( \mathcal{N}(\mu_1, 1), \mathcal{N}(\mu_2, 1) \right) \nonumber \\
   & = \frac{\alpha_{t-t'}}{2} + p_t\;\mathsf{TV} \left( \mathcal{N}(\mu_1, 1), \mathcal{N}(\mu_2, 1) \right) \nonumber \\
   & = \frac{\alpha_{t-t'}}{2} + p_t \; \erf\bigg(\frac{|\mu_1-\mu_2|}{2\sqrt{2}}\bigg) = \frac{\alpha_{t-t'}}{2} + p_t \; \erf\bigg(\frac{|\beta_{t-t'}x|}{2\sqrt{2}}\bigg) \nonumber \\
   & \leq \frac{\alpha_T}{2} + \erf\bigg(\frac{\beta_Tb}{2\sqrt{2}}\bigg) = N_T. \nonumber
\end{align}

\section{Proof of Lemma~\ref{lemma:nonstationary_approx}}
\label{proof:nonstationary_lemma_approx}

The aim is to study the value function of the policy in two MDPs $\M_X$ and $\M^{(T)}_X$, where $\M_X$ is the nonstationary MDP with augmented states consisting of the actual state and the entire history of events until then, and $\M^{(T)}_X$ is an approximate MDP where only events in the past $T$ time steps affect the state transition and event distribution.
%We adapt the proof of Theorem 1 from \citet{Xiao:2019:LearnToCombatCompoundingModelError}. 
For notational convenience, one can extend the state space of $\M^{(T)}_X$ to that of $\M_X$ by including the information of every past event in the state, without changing the transition function. Define a class of auxiliary MDPs $\M^{(T)}_h$, $h \in \left\{ 0, 1, 2, \dots \right\}$ (similar to that of Theorem~1 from \citet{Xiao:2019:LearnToCombatCompoundingModelError}) such that when starting from the initial (augmented) state $\bar{s} \in \bar{S}$, the transitions follow the transition kernel of $\M_X$ for $h$ time steps and then the transition kernel of $\M^{(T)}_X$ for all transitions that follow. The value functions induced by $\pi$ in $\M_X$ and $\M^{(T)}_X$ can be related using these new intermediate MDPs, which interpolate between the two environments.
	
Since only a single policy is under discussion, the $\pi$ in $V^\pi$ is omitted for the rest of this proof for the sake of simplicity. $V$ denotes the value function of the policy in $\M_X$, $V^{(T)}$ denotes its value in $\M^{(T)}_X$, and $V_h^{(T)}$ denotes its value in $\M^{(T)}_h$.

From the definition of $\M^{(T)}_{h}$ and the boundedness of $r$, we have 
	$$
		V^{(T)} = V_0^{(T)}, \text{ and } V = \lim_{h \rightarrow \infty} V^{(T)}_h,
	$$
	and hence,
	\begin{align*}
		V - V^{(T)} = \lim_{H \rightarrow \infty} V_H^{(T)} - V_0 
			= \lim_{H \rightarrow \infty} \left( V_H^{(T)} - V_0 \right) 
			 & = \lim_{H \rightarrow \infty} \sum_{h=0}^{H-1} \left( V_{h+1}^{(T)} - V_h^{(T)} \right) \\*
			 & = \sum_{h=0}^{\infty} \left( V_{h+1}^{(T)} - V_h^{(T)} \right).
	\end{align*}
That is, the difference between $V$ and $V^{(T)}$ can be characterized in terms of the differences between $V_h^{(T)}$ and $V_{h+1}^{(T)}$ for each $h \in \{ 0, 1, 2, \dots \}$. 
 Now, for any augmented state $\bar{s} \in \bar{S}$,
	\begin{align*}
	V_h^{(T)}(\bar{s}) &= \sum_{t=0}^{h-1} \gamma^t \E_{\substack{\bar{s}_t \sim p_t(. | \bar{s}) \\  a_t \sim \pi(. | \bar{s}_t) \\ \bar{s}_{t+1} \sim p(. | \bar{s}_t, a_t)}} \left[ r(\bar{s}_t, a_t, \bar{s}_{t+1}) \right] + \gamma^h \E_{\substack{\bar{s}_h \sim p_h(. | \bar{s}) \\  a_h \sim \pi(. | \bar{s}_h) \\ \bar{s}_{h+1} \sim p^{(T)}(. | \bar{s}_h, a_h)}} \left[ r(\bar{s}_h, a_h, \bar{s}_{h+1}) \right] \\*
		& \qquad \qquad \qquad \qquad \qquad \qquad \qquad + \sum_{t=h+1}^{\infty} \gamma^t \E_{\substack{\bar{s}_t \sim p_{t-h}^{(T)} \circ p_h(. | \bar{s}) \\  a_t \sim \pi(. | \bar{s}_t) \\ \bar{s}_{t+1} \sim p^{(T)}(. | \bar{s}_t, a_t)}} \left[ r(\bar{s}_t, a_t, \bar{s}_{t+1}) \right],
	\end{align*}
	where $p_t(. | \bar{s})$ denotes a transition of $t$ steps in the environment $\M_X$ starting from $\bar{s}$, a superscript $^{(T)}$ denotes the corresponding object in $\M^{(T)}_X$, and the policy $\pi$ is implicit whenever it is not explicitly mentioned in any expression. The reward function does not depend on the external events, so $r(\bar{s}_t, a_t, \bar{s}_{t+1})$ is just $r(s_t, a_t, s_{t+1})$, where $\bar{s} = (s, x)$ and $s$ is the actual state.

	This can also be written as
	$$
		V_h^{(T)}(\bar{s}) = \sum_{t=0}^{h-1} \gamma^t \E_{\substack{\bar{s}_t \sim p_t(. | \bar{s}) \\  a_t \sim \pi(. | \bar{s}_t) \\ \bar{s}_{t+1} \sim p(. | \bar{s}_t, a_t)}} \left[ r(\bar{s}_t, a_t, \bar{s}_{t+1}) \right] + \gamma^h \E_{\substack{\bar{s}_h \sim p_h(. | \bar{s}) }} V^{(T)} (\bar{s}_h).
	$$
	Using these two representations for $V_h^{(T)}$ and $V_{h+1}^{(T)}$ respectively gives
\begin{align*}
	V_{h+1}^{(T)}(\bar{s}) &- V_h^{(T)}(\bar{s}) \\ 
    & = \gamma^h \E_{\substack{\bar{s}_h \sim p_h(. | \bar{s}) \\  a_h \sim \pi(. | \bar{s}_h) \\ \bar{s}_{h+1} \sim p(. | \bar{s}_h, a_h)}} \left[ r(\bar{s}_h, a_h, \bar{s}_{h+1}) \right] 
	+ \gamma^{h+1} \E_{\substack{\bar{s}_{h+1} \sim p_{h+1}(. | \bar{s}) }} V^{(T)} (\bar{s}_{h+1}) \\
	& \qquad \qquad- \gamma^h \E_{\substack{\bar{s}_h \sim p_h(. | \bar{s}) \\  a_h \sim \pi(. | \bar{s}_h) \\ \bar{s}_{t+1} \sim p^{(T)}(. | \bar{s}_h, a_h)}} \left[ r(\bar{s}_h, a_h, \bar{s}_{h+1}) \right] - \gamma^{h+1} \E_{\substack{\bar{s}_{h+1} \sim p^{(T)} \circ p_h(. | \bar{s}) }} V^{(T)}(\bar{s}_{h+1}) \\
	& = \gamma^h \E_{\substack{\bar{s}_h \sim p_h(. | \bar{s}) \\  a_h \sim \pi(. | \bar{s}_h)}}  \bigg[ \E_{\bar{s}_{h+1} \sim p(. | \bar{s}_h, a_h)} r(\bar{s}_h, a_h, \bar{s}_{h+1}) - \E_{\bar{s}_{h+1} \sim p^{(T)}(. | \bar{s}_h, a_h)} r(\bar{s}_h, a_h, \bar{s}_{h+1}) \bigg] \\*
	& \qquad + \gamma^{h+1} \E_{\substack{\bar{s}_h \sim p_h(. | \bar{s}) \\ a_h \sim \pi(. | \bar{s}_h, a_h)}} \bigg[ \E_{\bar{s}_{h+1} \sim p(. | \bar{s}_h, a_h)} V^{(T)}(\bar{s}_{h+1}) - \E_{\bar{s}_{h+1} \sim p^{(T)}(. | \bar{s}_h, a_h)} V^{(T)}(\bar{s}_{h+1}) \bigg].
	\end{align*}

Because the current state and current event marks are independent and depend only on the previous events, the transition kernel can be factored as two independent distributions over $S$ and $\X$.
	Hence, replacing the augmented state $\bar{s} \in \bar{S}$ with the explicit representation $(s, H)$, where $s \in S$ and $H \in \X^\infty$ gives, for any bounded measurable function $f$ on $S \times \X^\infty$,
	\begin{align}
	& \E_{(s_{t+1}, H_{t+1}) \sim p(. | s_{t}, H_{t}, a_t)} f(s_{t+1}, H_{t+1}) - \E_{(s_{t+1}, H_{t+1}) \sim p^{(T)}(. | s_{t}, H_{t}, a_t)} f(s_{t+1}, H_{t+1}) \nonumber \\* 
	& \qquad = \E_{\substack{s_{t+1} \sim Q_{H_t}(. | s_{t}, a_t) \\ X_{t+1} \sim Q^X_{H_t}(.| s_{t}, a_t})} f(s_{t+1}, X_{t+1}, H_t) - \E_{\substack{s_{t+1} \sim Q^{(T)}_{H_t}(. | s_{t}, a_t) \\ X_{t+1} \sim Q^{X(T)}_{H_t}(.| s_{t}, a_t})} f(s_{t+1}, X_{t+1}, H_t) \nonumber \\
	& \qquad = \E_{\substack{s_{t+1} \sim Q_{H_t}(. | s_{t}, a_t) \\ X_{t+1} \sim Q^X_{H_t}(.)}} f(s_{t+1}, X_{t+1}, H_t) - \E_{\substack{s_{t+1} \sim Q^{(T)}_{H_t}(. | s_{t}, a_t) \\ X_{t+1} \sim Q^X_{H_t}(.)}} f(s_{t+1}, X_{t+1}, H_t) \nonumber \\*
	& \qquad \qquad \qquad + \E_{\substack{s_{t+1} \sim Q^{(T)}_{H_t}(. | s_{t}, a_t) \\ X_{t+1} \sim Q^X_{H_t}(.)}} f(s_{t+1}, X_{t+1}, H_t) - \E_{\substack{s_{t+1} \sim Q^{(T)}_{H_t}(. | s_{t}, a_t) \\ X_{t+1} \sim Q^{X(T)}_{H_t}(.)}} f(s_{t+1}, X_{t+1}, H_t) \nonumber \\
	& \qquad \leq \E_{\substack{X_{t+1} \sim Q^X_{H_t}(.)}} \left[ \left\Vert f(., X_{t+1}, H_t) \right\Vert_{\infty} \mathsf{TV}\left( Q_{H_t}(. | s_t, a_t), Q^{(T)}_{H_t}(. | s_t, a_t) \right) \right] \nonumber \\*
	& \qquad \qquad \qquad + \E_{s_{t+1} \sim Q^{(T)}_{H_t}(. | s_t, a_t)} \left[ \left\Vert f(s_{t+1}, ., H_t) \right\Vert_{\infty} \mathsf{TV}\left( Q^X_{H_t}(.), Q^{X(T)}_{H_t}(.) \right) \right] \nonumber \\
	& \qquad \leq \left\Vert f \right\Vert_\infty \left( \sum_{t=T+1}^\infty M_t + \sum_{t=T+1}^\infty N_t \right).
    \label{eqn:function_different_histories:upper_bound}
	\end{align}
	The second-to-last inequality is a result of the Total Variation Distance being an integral probability metric~\citep{Sriperumbudur:2009:IPMsDivergencesBinaryClassification}. Using this expression in the previous equation gives
	\begin{align*}
	V_{h+1}^{(T)}(\bar{s}) - V_h^{(T)}(\bar{s}) & \leq \gamma^h \left\Vert r \right\Vert_\infty \left( \sum_{t=T+1}^\infty \left(M_t + N_t \right) \right) 
		 + \gamma^{h+1} \left\Vert V^{(T)} \right\Vert_\infty \left( \sum_{t=T+1}^\infty \left(M_t + N_t \right) \right) \\
		& \leq \gamma^h \left( \left\Vert r \right\Vert_\infty + \gamma \left\Vert V^{(T)} \right\Vert_\infty \right) \left( \sum_{t=T+1}^\infty \left(M_t + N_t \right) \right).
	\end{align*}
	This gives the final result,
	\begin{align*}
	\left\Vert V - V^{(T)} \right\Vert_\infty & \leq \sum_{h=0}^\infty \gamma^h \left( \left\Vert r \right\Vert_\infty + \gamma \left\Vert V^{(T)} \right\Vert_\infty \right) \left( \sum_{t=T+1}^\infty \left(M_t + N_t \right) \right) \\*
	& = \frac{1}{1 - \gamma} \left( \left\Vert r \right\Vert_\infty + \gamma \left\Vert V^{(T)} \right\Vert_\infty \right) \left( \sum_{t=T+1}^\infty \left(M_t + N_t \right) \right).
	\end{align*}

    \hfill$\blacksquare$

\subsection{Proof of Lemma~\ref{lemma:nonstationary_state_cropping}}
\label{proof:nonstationary_lemma_state_cropping}
Let $\pi$ be a policy that depends only on the current state and the past $T$ events.
\begin{align*}
	V^{\pi}((s, x_{0:\infty})) & = \E_{\substack{s' \sim Q_{x_{0:\infty}}(. | s, a) \\ x' \sim Q_{x_{0:\infty}}(.) \\ a = \pi((s, x_{0:T}))}} \left[ r(s, a, s') + \gamma V^\pi\left( (s', x', x_{0:\infty}) \right) \right], \\
	\text{and } V^{\pi}((s, x_{0:T}, 0)) & = \E_{\substack{s' \sim Q_{x_{0:T}}(. | s, a) \\ x' \sim Q_{x_{0:T}}(.) \\ a = \pi((s, x_{0:T}))}} \left[ r(s, a, s') + \gamma V^\pi \left( (s', x', x_{0:T}) \right) \right].
\end{align*}
Taking the difference between these two expressions and adding and subtracting the additional term
$$
\E_{\substack{s' \sim Q_{x_{0:T}}(. | s, a) \\ x' \sim Q_{x_{0:T}}(.) \\ a = \pi((s, x_{0:T}))}} V^\pi((s', x', x_{0:\infty}))
$$
\begin{align*}
\text{gives } & \left| V^{\pi}((s, x_{0:\infty})) - V^{\pi}((s, x_{0:T}, 0)) \right| \\
& \qquad \qquad \qquad \leq \left\Vert r \right\Vert_\infty \left( \sum_{t=T+1}^\infty M_t \right) + \gamma \left\Vert V^\pi \right\Vert_{\infty} \left( \sum_{t=T+1}^\infty M_t + N_t \right) \\
	& \qquad \qquad \qquad \qquad \qquad + \gamma \E_{\substack{s' \sim Q_{x_{0:T}}(. | s, a) \\ x' \sim Q_{x_{0:T}}(.) \\ a = \pi((s, x_{0:T}))}} \left\vert V^\pi((s', x', x_{0:\infty})) - V^\pi((s', x', x_{0:T})) \right\vert.
\end{align*}
In the above inequality, the left-hand side is the difference between two value functions whose first $T$ events are the same and differ everywhere else, but the right-hand side has a discounted term in which the first $T+1$ terms coincide. Therefore, repeating the same procedure results in value function differences in states that increasingly coincide.

Since each repetition adds a $\gamma$ factor to the term, which itself is bounded by $\frac{1}{1 - \gamma}$ due to the rewards being bounded, the resulting series converges, giving us the following bound.
\begin{align*}
\sup_{\bar{s} = (s, x_{0:\infty})} \big\vert V^{\pi} ((s, x_{0:\infty})) & - V^{\pi}((s, x_{0:T}, 0)) \big\vert \\
& \leq \sum_{t = T+1}^\infty \gamma^{t - T - 1} \left[ \left\Vert r \right\Vert_\infty \left( \sum_{t'=t}^\infty M_{t'} \right) + \gamma \left\Vert V \right\Vert_{\infty} \left( \sum_{t'=t}^\infty M_{t'} + N_{t'} \right) \right] \\
& \leq \left( 1 + \frac{\gamma}{1 - \gamma} \right) \sum_{t=T+1}^\infty \sum_{t'=t}^\infty \gamma^{t-T-1} (M_{t'} + N_{t'}) \\
& \leq \frac{1}{1 - \gamma} \sum_{t'=T+1}^\infty \sum_{t=0}^{T-t'-1} \gamma^t (M_{t'} + N_{t'}) \\
& = \frac{1}{1 - \gamma} \sum_{t'=T+1}^\infty \frac{1 - \gamma^{t' - T}}{1 - \gamma} (M_{t'} + N_{t'}) \\
& \leq \frac{1}{(1 - \gamma)^2} \sum_{t = T+1}^\infty \left( M_t + N_t \right).
\end{align*}

\hfill$\blacksquare$

\section{Sample Complexity}
\label{section:proofs:subsection:sample_complexity}

In this section, we provide details of the proofs of the sample complexity of policy evaluation as well as the overall policy iteration algorithm, wherein the policy evaluation step is performed using LSTD. The analysis closely follows the results in \citet{Lazaric:2012:FiniteSampleAnalysisOfLSPI}. We state the preliminaries and statements of some of their lemmas and provide details of where our proof differs from theirs.

\subsection{Sample Complexity of Least-Squares Policy Evaluation}

This theorem states an upper bound on the expected error of the learned value function. Therefore, the proof has two parts: the first is Lemma~\ref{lemma:nonstationary_empirical_error_bound}, which is a bound on the empirical value function, and the second is a generalization to the entire space. The bound on the empirical error is quite similar to the original in the stationary case; therefore, we skip some steps. The final bound on the expected error, whose proof is provided in Section~\ref{section:proofs:subsection:sample_complexity:subsubsection:generalization}, offers more insights into the trade-off between the tractability of the algorithm and the approximation error due to nonstationarity.

\subsubsection{Empirical Error}
\label{section:proofs:subsection:sample_complexity:subsubsection:empirical_error}

We wish to show that with probability at least $1 - \delta$,
$$
    \left\Vert v - \hat{v} \right\Vert_N \leq \frac{1}{\sqrt{1 - \gamma^2}} \left\Vert v - \widehat{\Pi}v \right\Vert_N + \frac{L}{(1 - \gamma)^2} \sqrt{\frac{d}{\nu_N}} \left( \sqrt{\frac{2 \log(2d / \delta)}{N}} + \frac{1}{N} \right).
$$

Following \citet{Lazaric:2012:FiniteSampleAnalysisOfLSPI},
\begin{equation}
	\left\Vert V - \widehat{V} \right\Vert_N = \left\Vert v - \hat{v} \right\Vert_N \leq \frac{1}{\sqrt{1 - \gamma^2}} \left\Vert v - \widehat{\Pi}v \right\Vert_N + \frac{1}{1 - \gamma} \left\Vert \widehat{\Pi} v - \widehat{\Pi} \widehat{\T} v \right\Vert_N.
\label{eqn:nonstationary_prelim_empirical_error_bound}
\end{equation}
The first term is the approximation error, which is unavoidable owing to the restricted function space. The second term also includes the estimation error due to using the pathwise Bellman operator instead of the actual Bellman operator and is a function of the number of samples used in the training. It can be written as $\left\Vert \widehat{\Pi} \xi \right\Vert_N$, where $\xi = \widehat{\T} v - v$ is the estimation error of the output. Now, for $t < N$,
\begin{align*}
	\xi_t & = r_t + \gamma V(\bar{s}_{t+1}) - V(\bar{s}_t) \\*
		& = r(s_t, \pi(\bar{s}_t), s_{t+1}) + \gamma V(\bar{s}_{t+1}) - \E_{\bar{S}_{t+1}} \left[ r(s_t, \pi(\bar{s}_t, \bar{S}_{t+1}) + \gamma V(\bar{S}_{t+1}) \right] \\
		& = r(s_t, \pi(\bar{s}_t), s_{t+1}) - \E_{S_{t+1} \sim Q_{x_{:t}}(. | s_t, \pi(\bar{s}_t))} \left[ r(s_t, \pi(\bar{s}_t), S_{t+1}) \right] \\
			& \qquad \qquad \qquad \qquad \: + \gamma \left[ V(\bar{s}_{t+1}) - \E_{\bar{S}_{t+1} \sim Q,Q^X_{x_{:t}}(. | s_t, \pi(\bar{s}_t))} V(\bar{S}_{t+1}) \right],
\end{align*}
and for $t = N$,
\begin{align*}
	\xi_t  = r(s_t, \pi(\bar{s}_t), s_{t+1}) - &\E_{S_{t+1} \sim Q_{x_{:t}}(. | s_t, \pi(\bar{s}_t))} \left[ r(s_t, \pi(\bar{s}_t), S_{t+1}) \right] \\ & \qquad\qquad- \gamma \E_{\bar{S}_{t+1} \sim Q,Q^X_{x_{:t}}(. | s_t, \pi(\bar{s}_t))} \left[ V(\bar{S}_{t+1}) \right].
\end{align*}
%Now $\xi_t = r(s_t, \pi(s_t), s_{t+1}) + \gamma V(\bar{x}_{t+1}) - V(\bar{x}_t)$ with $\left( \bar{x}_t \right)_{t}$ being the Markov chain induced by policy $\pi$ in the augmented MDP, and $\bar{s}_t$ being the realizations of $\bar{x}_t$.
$\xi_t$ are functions of $\bar{s}_t$, which are samples from the Markov chain induced by policy $\pi$ in the augmented MDP $\bar{M}$. Considering them as random variables, $\left( \xi_t \varphi_t(\bar{s_t}) \right)_{1 \leq t < N}$ is a martingale difference sequence w.r.t $\bar{s}_t$, with each element being zero mean and bounded by $\frac{L}{1 - \gamma}$. So, applying Azuma's inequality to $( \xi_t \varphi_i(\bar{s}))_{1 \leq t < N}$ gives, for each $i \in [d]$,
\begin{align*}
	\P \left( \left| \sum_{t=1}^{N-1} \xi_t \varphi_i(s_t) \right| \geq \epsilon \right) < 2 \exp \left( - \frac{\epsilon^2 (1 - \gamma)^2}{2(N-1) L^2} \right).
\end{align*}
Letting the right hand side be $\frac{\delta}{d}$, with probability at least $1 - \delta$,
\begin{align*}
	\left| \sum_{t=1}^{N-1} \xi_t \varphi_i(s_t) \right| < \frac{L}{1 - \gamma} \sqrt{2(N - 1)\log \left( \frac{2d}{\delta} \right) }, \quad \text{for all} \; i \in [d].
\end{align*}
Since $| \xi_n \varphi_i(\bar{s}_t) | < \frac{L}{1 - \gamma}$ always, we have,
\begin{equation}
\left| \sum_{t=1}^{N-1} \xi_t \varphi_i(s_t) \right| < \frac{L}{1 - \gamma} \left( \sqrt{2(N - 1)\log \left( \frac{2d}{\delta} \right) } + 1 \right),  \; \text{for all} \;  i \in [d] \text{ w.p. } \geq 1 - \delta.
\label{eqn:nonstationary_xi_bound}
\end{equation}
Following \citet{Lazaric:2012:FiniteSampleAnalysisOfLSPI}, this can be used to bound $|| \hat{\Pi} \xi_n ||_N$ as
\begin{equation}
\left\Vert \widehat{\Pi} \widehat{\T} v - \widehat{\Pi} v \right\Vert_N \leq \frac{1}{N} \sqrt{\frac{d}{\nu_n}} \max_{i} \left| \sum_{t=1}^{N} \xi_t \varphi_i(s_t) \right|,
\label{eqn:nonstationary_variance_bound}
\end{equation}
where $\nu_n$ is the smallest positive eigenvalue of the Gram matrix $\Phi^{\transpose} \Phi$.
By substituting~\eqref{eqn:nonstationary_xi_bound} and \eqref{eqn:nonstationary_variance_bound} in \eqref{eqn:nonstationary_prelim_empirical_error_bound} gives the desired result.

\subsubsection{Generalization}
\label{section:proofs:subsection:sample_complexity:subsubsection:generalization}

Intuitively, bounding the expected error of the estimated value function based on the error at only a few samples requires the samples to be close to the stationary distribution and the function values at the samples to be close to their expected values.

The first requirement is formalized in terms of the rate of mixing of the Markov chain induced by the MDP using the policy.

\begin{definition}
    A Markov chain $\M = \left( X_t \right)_{t \geq 1}$ is said to be exponentially $\beta$-mixing with parameters $\bar{\beta}, b, \kappa$ if its mixing coefficients
    $$
        \beta_i = \sup_{t \geq 1} \left[ \sup_{B \in \sigma\left( X_{t+i}, \dots \right)} \left\vert \P \left(B | X_1, \dots, X_t \right) - \P(B) \right\vert \right]
    $$
    satisfy $\beta_i \leq \bar{\beta} \exp \left( -b i^\kappa \right)$.
\end{definition}

When such an exponentially mixing Markov chain is also ergodic and aperiodic with stationary distribution $\rho$, for any initial distribution $\lambda$, there is a bound on the following total variation: (Definition 21 of \cite{Lazaric:2012:FiniteSampleAnalysisOfLSPI})
$$
    \left\Vert \int_\X \lambda(dx) \P \left(. | x \right) - \rho(.)  \right\Vert_{\mathsf{TV}} \leq \bar{\beta} \exp \left( -b i^\kappa \right).
$$

This helps bound the difference between the norm of a function in the stationary distribution and the empirical distribution defined using a few samples.

\begin{lemma}[Generalization Lemma for Markov chains, Lemma 24 of \cite{Lazaric:2012:FiniteSampleAnalysisOfLSPI}]
    Let $\tilde{\F}$ be the $d$-dimensional class of linear functions truncated at the threshold $B$. Let $\left( X_t \right)_{t \in [n]}$ be a sequence of samples from an ergodic, aperiodic, exponentially $\beta$-mixing Markov chain with parameters $\bar{\beta}, b, \kappa$, arbitrary initial distribution, and stationary distribution $\rho$. If the first $\tilde{n}$ samples are discarded and only the last $n - \tilde{n}$ samples are used, then with probability at least $1 - \delta$, the empirical and $l_2(\rho)$ norm of any function $\tilde{f} \in \tilde{\F}$ are related as,
    \begin{align*}
        \left\Vert \tilde{f} \right\Vert - 2 \left\Vert \tilde{f} \right\Vert_{X_{1:n}} & \leq \epsilon(\delta),  \\
        \left\Vert \tilde{f} \right\Vert_{X_{1:n}} - 2 \sqrt{2} \left\Vert \tilde{f} \right\Vert & \leq \epsilon(\delta),
    \end{align*}
    $$
        \text{for } \epsilon(\delta) = 12 B \sqrt{\frac{2 \Lambda(n-\tilde{n}, d, \delta)}{n - \tilde{n}} \max \left\{ \frac{\Lambda(n-\tilde{n}, d, \delta)}{1}, 1 \right\}^{\frac{1}{\kappa}}},
    $$
    $$
        \Lambda(n, d, \delta) = 2(d+1)\log n + \log \frac{\epsilon}{\delta} + \log^+ \left( \max \left\{ 16 (6e)^{2(d+1)} , \bar{\beta} \right\} \right),
    $$
    $$
        \text{ and } \tilde{n} = \left( \frac{1}{b} \log \left( \frac{2e \bar{\beta} n}{\delta} \right) \right)^{\frac{1}{\kappa}}.
    $$
\label{lemma:nonstationary_markov_chain_generalization}
\end{lemma}

The above lemma is essentially a generalization bound for truncated linear functions operating on samples from an exponentially mixing Markov chain and is used in the proof of Theorem~\ref{thm:nonstationary_sample_complexity_evaluation} to translate the bound on the empirical error to a bound on the expected error in the stationary distribution of the Markov chain induced by the policy under evaluation.

The above bounds hold for every member of the $d$-dimensional class of functions, hence the dependence of $\epsilon$ on $d$ through $\Lambda(n, d, \delta)$. In contrast, when there is just one truncated linear function under consideration, the above bound holds, but with
$$
    \Lambda(n, \delta) = \log \frac{\epsilon}{\delta} + \log \left( \max \left\{ 6, n \bar{\beta} \right\} \right).
$$

Another issue in generalizing the sample-based bound in Lemma~\ref{lemma:nonstationary_empirical_error_bound} to the entire state space is its dependence on the eigenvalues of the sample Gram matrix. To tackle this problem, \cite{Lazaric:2012:FiniteSampleAnalysisOfLSPI} derived the following probabilistic lower bound on the smallest eigenvalue of the sample-based Gram matrix as a function of the smallest eigenvalue of the Gram matrix $G = \E_{x \sim \rho} \left[ \phi(x) \phi(x)^\transpose \right]$.

\begin{lemma}[Lemma 4 of \cite{Lazaric:2012:FiniteSampleAnalysisOfLSPI}]
    Let $\omega > 0$ be the smallest eigenvalue of the Gram matrix, defined above. If the data generating process is an exponentially $\beta$-mixing Markov chain with parameters $\bar{\beta}, b, \kappa$, then the smallest eigenvalue $\nu_n$ of the sample-based Gram matrix constructed using $n$ samples satisfies
    $$
        \sqrt{\nu_n} \geq \sqrt{\nu} = \frac{\sqrt{\omega}}{2} - 6L \sqrt{\frac{2 \Lambda(n, d, \delta)}{n} \max\left\{ \frac{\Lambda(n, d, \delta)}{b}, 1 \right\}^{\frac{1}{\kappa}}} > 0,
    $$
    provided the number of samples $n$ satisfies
    $$
        n > \frac{288 L^2 \Lambda(n, d, \delta)}{\omega} \max \left\{ \frac{\Lambda(n, d, \delta)}{b}, 1 \right\}^{\frac{1}{\kappa}},
    $$
    $$
        \text{where } \; \Lambda(n, d, \delta) = 2(d+1) \log n + \log \frac{e}{\delta} + \log^+ \left( \max \left\{ 18 (6e)^{2(d+1)} , \bar{\beta} \right\} \right)
    $$
    comes from a simpler version of Lemma~\ref{lemma:nonstationary_markov_chain_generalization} without the additional initial $\tilde{n}$ samples for the burn-in of the Markov chain.
\label{lemma:smallest_eigenvalue_bound_sample_gram_matrix}
\end{lemma}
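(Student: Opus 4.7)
The plan is to reduce the eigenvalue bound to a uniform generalization statement about a $d$-dimensional class of linear functions, then apply the Markov-chain generalization lemma (Lemma~\ref{lemma:markov_chain_generalization}) to transfer an $l_2(\rho)$ bound to an empirical one. Concretely, I would start from the variational characterization
\[
\sqrt{\nu_n}=\inf_{u\in\R^d,\,\|u\|_2=1}\|f_u\|_{X_{1:n}},\qquad \sqrt{\omega}\le \inf_{u\in\R^d,\,\|u\|_2=1}\|f_u\|_{\rho},
\]
where $f_u(x)=u^{\top}\phi(x)$ belongs to the linear span of $\varphi_1,\dots,\varphi_d$. Because $\|\phi(x)\|_2\le L$ uniformly, Cauchy--Schwarz gives $|f_u(x)|\le L$ for every unit $u$, so the family $\{f_u:\|u\|_2=1\}$ lies in the $d$-dimensional class of truncated linear functions with threshold $B=L$. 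This reframes the eigenvalue question as a uniform concentration question about empirical norms over this class.

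Next I would invoke the generalization lemma for Markov chains applied to this class of truncated linear functions. The lemma provides, with probability at least $1-\delta$, the uniform inequality $\|\tilde f\|_{\rho}-2\|\tilde f\|_{X_{1:n}}\le\epsilon(\delta)$ for every $\tilde f$ in the class, where $\epsilon(\delta)=12L\sqrt{\tfrac{2\Lambda(n,d,\delta)}{n}\max\{\Lambda(n,d,\delta)/b,1\}^{1/\kappa}}$ (since $B=L$). Taking the infimum over unit vectors on both sides and rearranging yields
\[
\sqrt{\nu_n}\ge \inf_{\|u\|_2=1}\|f_u\|_{X_{1:n}}\ge \frac{\inf_{\|u\|_2=1}\|f_u\|_{\rho}-\epsilon(\delta)}{2}\ge \frac{\sqrt{\omega}-\epsilon(\delta)}{2}=\frac{\sqrt{\omega}}{2}-6L\sqrt{\frac{2\Lambda(n,d,\delta)}{n}\max\!\left\{\frac{\Lambda(n,d,\delta)}{b},1\right\}^{1/\kappa}},
\]
which is exactly $\sqrt{\nu}$ in the statement. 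Note that no Markov chain burn-in appears because, for the simpler situation of a single infimum over the unit ball, the covering-number argument behind the generalization lemma can be run without discarding initial samples, giving the slightly cleaner $\Lambda(n,d,\delta)$ used in the statement.

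The sample-size condition is obtained by requiring the right-hand side to be strictly positive. That is, $\epsilon(\delta)<\sqrt{\omega}$ is equivalent to $144L^{2}\cdot\tfrac{2\Lambda(n,d,\delta)}{n}\max\{\Lambda(n,d,\delta)/b,1\}^{1/\kappa}<\omega$, which rearranges to the stated threshold $n>\tfrac{288L^{2}\Lambda(n,d,\delta)}{\omega}\max\{\Lambda(n,d,\delta)/b,1\}^{1/\kappa}$. Under this condition $\sqrt{\nu}>0$, so the lower bound $\sqrt{\nu_n}\ge\sqrt{\nu}>0$ holds on the same $1-\delta$ event.

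The main obstacle I anticipate is justifying uniformity in $u$ with the correct dimensional dependence: the generalization lemma needs to apply simultaneously to every $f_u$ with $\|u\|_2=1$, not just to a fixed one. This is handled by the fact that the covering-number / pseudo-dimension analysis underlying Lemma~\ref{lemma:markov_chain_generalization} already captures a whole $d$-dimensional linear class, producing the $2(d+1)\log n$ and $\log^{+}(\max\{18(6e)^{2(d+1)},\bar\beta\})$ terms inside $\Lambda(n,d,\delta)$. Verifying that these covering-number bounds remain valid for the Markov-chain version of the symmetrization / blocking argument (and do not accidentally require the boosted $16(6e)^{2(d+1)}$ constant from the truncated-regression setting) is the subtle point; everything else is a clean algebraic rearrangement of the resulting uniform bound.
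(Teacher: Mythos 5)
The paper does not prove this lemma itself---it is imported verbatim as Lemma 4 of \cite{Lazaric2012FiniteSampleAnalysisOfLSPI}---and your reconstruction follows exactly the argument used there: the variational characterization $\sqrt{\nu_n}=\inf_{\Vert u\Vert_2=1}\Vert f_u\Vert_{X_{1:n}}$ and $\sqrt{\omega}=\inf_{\Vert u\Vert_2=1}\Vert f_u\Vert_\rho$, the uniform generalization bound over the unit ball of the linear class (on which truncation at $B=L$ is a no-op), and the rearrangement $\sqrt{\nu_n}\geq(\sqrt{\omega}-\epsilon(\delta))/2$ together with the observation that the stated sample-size condition is exactly $\epsilon(\delta)<\sqrt{\omega}$. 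The one point worth flagging is a convention mismatch internal to the paper rather than a gap in your argument: the bound $|f_u(x)|\leq L$ (hence the constant $6L$) requires $\sup_x\Vert\phi(x)\Vert_2\leq L$ as in \cite{Lazaric2012FiniteSampleAnalysisOfLSPI}, whereas the paper's main text only assumes each individual $\varphi_i$ is bounded by $L$, under which the constant would degrade to $6L\sqrt{d}$.
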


The generalization Lemma~\ref{lemma:nonstationary_markov_chain_generalization}, combined with the lower bound on the eigenvalues and our results in Sections~\ref{section:guarantees} and \ref{section:policy_iteration:subsection:analysis}, give rise to Theorem~\ref{thm:nonstationary_sample_complexity_evaluation} on the sample complexity of policy evaluation.

\subsection{Sample Complexity of Approximate Least-Squares Policy Iteration (Algorithm~\ref{alg:nonstationary_least_squares_policy_iteration})}

Here, we develop a recursive bound on the approximate suboptimality $\Vert V^* - \tilde{V}^{\pi_{k+1}} \Vert$ needed to study the propagation of error and obtain an upper bound on the suboptimality of the policy after $K$ steps of our algorithm.
Denote
$l_k = V^* - V^{\pi_k}$,
$e_k = \tilde{V}_k - V^{\pi_k}$, 
$g_k = V^{\pi_{k+1}} - V^{\pi_k}$, 
$b_k = \tilde{V}_k - T^{\pi_k} \widetilde{V}_k$, and 
$h_k = T^{\bar{\pi}_{k+1}} \widetilde{V}_k - T^{\pi_{k+1}} \widetilde{V}_k$.

%\begin{align*}
%l_k &= V^* - V^{\pi_k}, \\
%e_k &= \tilde{V}_k - V^{\pi_k}, \\
%g_k &= V^{\pi_{k+1}} - V^{\pi_k}, \\
%b_k &= \tilde{V}_k - T^{\pi_k} \tilde{V}_k, \\
%h_k &= T^{\bar{\pi}_{k+1}} \tilde{V}_k - T^{\pi_{k+1}} \tilde{V}_k
%\end{align*}

\begin{lemma}

    \begin{align*}
        l_{k+1} & \leq \gamma P^{\pi^*}l_k + \gamma P^{{\pi}_{k+1}} \left[ \left[ I - \gamma \left(I - \gamma P^{{\pi}_{k+1}}\right)^{-1}\left(P^{{\pi}_k} - P^{{\pi}_{k+1}}\right)\right]e_k \right. \\*
        & \qquad \qquad  + \left. \left[ I - \gamma P^{{\pi}_{k+1}} \right]^{-1}h_k \right]
        - \gamma P^{\pi^*}e_k + h_k \\*
        & \leq \gamma P^{\pi^*}l_k + \gamma \left[ P^{{\pi}_{k+1}} \left( I - \gamma P^{{\pi}_{k+1}} \right)^{-1} - P^{\pi^*} \left( I - \gamma P^{{\pi}_k} \right)^{-1} \right]b_k \\*
        & \qquad \qquad + \left[ \gamma P^{{\pi}_{k+1}} \left( I - \gamma P^{{\pi}_{k+1}} \right)^{-1} + I \right]h_k
    \end{align*}
\label{lemma:munos_recursion}
\end{lemma}
\begin{proof}
We adapt and modify Lemmas 2-4 of \citet{Munos:2003:ErrorBoundsforApproximatePI} to account for the fact that the policy under consideration by our algorithm is not the true greedy policy, but an approximately greedy policy that considers only a subset of the extended state, which is the current state and a finite history of events. Following Lemma 2 of \citet{Munos:2003:ErrorBoundsforApproximatePI}, we obtain
\begin{align}
l_{k+1} &= V^*-V^{{\pi}_{k+1}} \nonumber \\
        &= (T^{\pi^*}V^* - T^{\pi^*}V^{{\pi}_k}) + (T^{\pi^*}V^{{\pi}_k} - T^{\pi^*}\tilde{V}_k) + (T^{\pi^*}\tilde{V}_k - T^{\bar{\pi}_{k+1}}\tilde{V}_k) \nonumber \\ 
        & \quad + (T^{\bar{\pi}_{k+1}}\tilde{V}_k - T^{{\pi}_{k+1}}\tilde{V}_k) + (T^{{\pi}_{k+1}}\tilde{V}_k - T^{{\pi}_{k+1}}V^{{\pi}_k}) + (T^{{\pi}_{k+1}}V^{{\pi}_k} - T^{{\pi}_{k+1}}V^{{\pi}_{k+1}}) \nonumber \\
        &\leq \gamma  P^{\pi^*}l_k - \gamma P^{\pi^*}e_k + 0 + h_k + \gamma P^{{\pi}_{k+1}}e_k - \gamma P^{{\pi}_{k+1}}g_k.
        \label{eqn:l_k:upper_bound}
\end{align}
Similarly, following Lemma 3 of \citet{Munos:2003:ErrorBoundsforApproximatePI},
\begin{align*}
    g_k &= V^{{\pi}_{k+1}} - V^{{\pi}_k} \\
        &= (T^{{\pi}_{k+1}}V^{{\pi}_{k+1}} - T^{{\pi}_{k+1}}V^{{\pi}_k}) + (T^{{\pi}_{k+1}}V^{{\pi}_k} - T^{{\pi}_{k+1}}\tilde{V}_k) + (T^{{\pi}_{k+1}}\tilde{V}_k - T^{\bar{\pi}_{k+1}}\tilde{V}_k) \\
        &\quad + (T^{\bar{\pi}_{k+1}}\tilde{V}_k - T^{{\pi}_k}\tilde{V}_k) + (T^{{\pi}_k}\tilde{V}_k - T^{{\pi}_k}V^{{\pi}_k}) \\
        &\geq \gamma P^{{\pi}_{k+1}}g_k - \gamma P^{{\pi}_{k+1}} - h_k + \gamma P^{{\pi}_k}e_k \\
        &\geq -\left[I - \gamma P^{{\pi}_{k+1}}\right]^{-1}\left[\gamma \left(P^{{\pi}_{k+1}}-P^{{\pi}_k}\right)e_k + h_k \right].
\end{align*}
Finally, following Lemma 4 of \citet{Munos:2003:ErrorBoundsforApproximatePI},
\begin{align*}
    e_k-g_k \leq \left[ I - \gamma \left( I - \gamma P^{{\pi}_{k+1}}\right)^{-1} \left( P^{{\pi}_k} - P^{{\pi}_{k+1}} \right) \right]e_k + \left[ I - \gamma P^{{\pi}_{k+1}} \right]^{-1}h_k.
\end{align*}
Substituting the above in \eqref{eqn:l_k:upper_bound} yields the desired result.

\end{proof}

\begin{lemma}
\[
|h_k(\bar{s})| \leq \frac{2R_{\mathrm{max}}}{1-\gamma}\sum_{t=T+1}^{\infty}(M_t + N_t)
\]
for all states $\bar{s} \in \bar{S}$, the augmented state space.
\label{lemma:bound_h_k}
\end{lemma}

\begin{proof}
For $\bar{s}^{(T)} = (s,x_{0:T})$, the approximately greedy policy and true greedy policy are defined as below.
\begin{equation*}
    \pi_{k+1} \left( \bar{s}^{(T)} \right) = \argmax_{a \in A} \; \E_{\substack{s' \sim Q_{x_{0:T}, (0)}(. | s,a) \\ x' \sim Q^X_{x_{0:T}, (0)}(.|s)}} \bigg[ r(s, a, s')  + \gamma \widetilde{V}_k \left( \left( s', x', x_{0:T-1} \right) \right) \bigg]
\end{equation*}

\begin{equation*}
    \bar{\pi}_{k+1} \left( \bar{s} \right) = \argmax_{a \in A} \; \E_{\substack{s' \sim Q_{x_{0:\infty}}(. | s,a) \\ x' \sim Q^X_{x_{0:\infty}}(.|s)}} \bigg[ r(s, a, s')  + \gamma \widetilde{V}_k \left( \left( s', x', x_{0:T-1} \right) \right) \bigg]
\end{equation*}

Define corresponding Bellman operators

\begin{equation*}
     (T^{\pi_{k+1}} V_k)(\bar{s}) = \E_{\substack{s' \sim Q_{x_{0:\infty}}(. | s,\pi_{k+1}(\bar{s}^{(T)})) \\ x' \sim Q^X_{x_{0:\infty}}(.|s)}} \bigg[ r(s, \pi_{k+1}(\bar{s}^{(T)}), s')  + \gamma \widetilde{V}_k \left( \left( s', x', x_{0:T-1} \right) \right) \bigg]
\end{equation*}

\begin{equation*}
     (T^{\bar{\pi}_{k+1}} V_k)(\bar{s}) = \E_{\substack{s' \sim Q_{x_{0:\infty}}(. | s,\bar{\pi}_{k+1}(\bar{s}) \\ x' \sim Q^X_{x_{0:\infty}}(.|s)}} \bigg[ r(s, \bar{\pi}_{k+1}(\bar{s}), s')  + \gamma \widetilde{V}_k \left( \left( s', x', x_{0:T-1} \right) \right) \bigg]
\end{equation*}

Define the approximate state-action value function $\widetilde{V}_{\mathrm{action}}$ function as below (state-action function is also referred to as $Q$ function, to avoid conflict of notation, we denote this by $V_{\mathrm{action}}$)
\begin{equation*}
     \widetilde{V}_{\mathrm{action}(\bar{s},a)} = \E_{\substack{s' \sim Q_{x_{0:\infty}}(. | s,a) \\ x' \sim Q^X_{x_{0:\infty}}(.|s)}} \bigg[ r(s, a, s')  + \gamma \widetilde{V}_k \left( \left( s', x', x_{0:T-1} \right) \right) \bigg]
\end{equation*}

Using the definition of $\widetilde{V}_{\mathrm{action}}$, we can write policies $\pi_{k+1}$, $\bar{\pi}_{k+1}$ and their corresponding Bellman operators as below

\[
\pi_{k+1}(\bar{s}^{(T)}) = \arg\max_a \widetilde{V}_{\mathrm{action}}((\bar{s}^{(T)}, (0)_{T+1:\infty}), a)
\]
\[
\bar{\pi}_{k+1}(\bar{s}) = \arg\max_a \widetilde{V}_{\mathrm{action}}(\bar{s}, a)
\]
\[
(T^{\pi_{k+1}}V_k)(\bar{s}) = \widetilde{V}_{\mathrm{action}}(\bar{s}, \pi_{k+1}(\bar{s}^{(T)})) = \widetilde{V}_{\mathrm{action}}(\bar{s}, \bar{\pi}_{k+1}((\bar{s}^{(T)}, (0)_{T+1:\infty})))
\]
\[
(T^{\bar{\pi}_{k+1}}V_k)(\bar{s}) = \widetilde{V}_{\mathrm{action}}(\bar{s}, \bar{\pi}_{k+1}(\bar{s})) = \max_a \widetilde{V}_{\mathrm{action}}(\bar{s}, a)
\]

Using the above definitions, we can write $h_k(\bar{s})$ for all $\bar{s} \in \bar{S}$ as

\[
h_k(\bar{s}) = \widetilde{V}_{\mathrm{action}}(\bar{s}, \bar{\pi}_{k+1}(\bar{s})) - \widetilde{V}_{\mathrm{action}}(\bar{s}, \bar{\pi}_{k+1}((\bar{s}^{(T)}, (0)_{T+1:\infty})))
\]

\begin{align}
    h_k(\bar{s}) &= \widetilde{V}_{\mathrm{action}}(\bar{s}, \bar{\pi}_{k+1}(\bar{s})) - \widetilde{V}_{\mathrm{action}}\left(\bar{s}, \bar{\pi}_{k+1}((\bar{s}^{(T)}, (0)_{T+1:\infty}))\right) \nonumber \\
    h_k(\bar{s}) &= \widetilde{V}_{\mathrm{action}}(\bar{s}, \bar{\pi}_{k+1}(\bar{s})) - \widetilde{V}_{\mathrm{action}}\left((\bar{s}^{(T)}, (0)_{T+1:\infty}), \bar{\pi}_{k+1}((\bar{s}^{(T)}, (0)_{T+1:\infty}))\right) \nonumber \\
    &\quad + \widetilde{V}_{\mathrm{action}}\left((\bar{s}^{(T)}, (0)_{T+1:\infty}), \bar{\pi}_{k+1}((\bar{s}^{(T)}, (0)_{T+1:\infty}))\right) - \widetilde{V}_{\mathrm{action}}\left(\bar{s}, \bar{\pi}_{k+1}((\bar{s}^{(T)}, (0)_{T+1:\infty}))\right)
    \label{eqn:h_k:upper_bound_intermediate}
\end{align}

Using \eqref{eqn:function_different_histories:upper_bound}, we can bound 

\begin{align}
    \widetilde{V}_{\mathrm{action}}\left((\bar{s}^{(T)}, (0)_{T+1:\infty}), \bar{\pi}_{k+1}((\bar{s}^{(T)}, (0)_{T+1:\infty}))\right) - \widetilde{V}_{\mathrm{action}}\left(\bar{s}, \bar{\pi}_{k+1}((\bar{s}^{(T)}, (0)_{T+1:\infty}))\right) \nonumber \\
    \leq \|\widetilde{V}_{\mathrm{action}}\|_\infty \sum_{t=T+1}^{\infty}(M_t + N_t) \leq \frac{R_{\mathrm{max}}}{1-\gamma} \sum_{t=T+1}^{\infty}(M_t + N_t) 
    \label{eqn:h_k:bound:first_term}
\end{align}

Bounding the first term of \eqref{eqn:h_k:upper_bound_intermediate}, 
\begin{align*}
    &\widetilde{V}_{\mathrm{action}}(\bar{s}, \bar{\pi}_{k+1}(\bar{s})) - \widetilde{V}_{\mathrm{action}}\left((\bar{s}^{(T)}, (0)_{T+1:\infty}), \bar{\pi}_{k+1}((\bar{s}^{(T)}, (0)_{T+1:\infty}))\right) \\
    &= \widetilde{V}_{\mathrm{action}}(\bar{s}, \bar{\pi}_{k+1}(\bar{s})) - \widetilde{V}_{\mathrm{action}}\left((\bar{s}^{(T)}, (0)_{T+1:\infty}), \bar{\pi}_{k+1}(\bar{s})\right) \\
    & \quad +\widetilde{V}_{\mathrm{action}}\left((\bar{s}^{(T)}, (0)_{T+1:\infty}), \bar{\pi}_{k+1}(\bar{s})\right) - \widetilde{V}_{\mathrm{action}}\left((\bar{s}^{(T)}, (0)_{T+1:\infty}), \bar{\pi}_{k+1}((\bar{s}^{(T)}, (0)_{T+1:\infty}))\right)
\end{align*}

Using \eqref{eqn:function_different_histories:upper_bound} and by the definition of $\widetilde{V}_{\mathrm{action}}$ and $\bar{\pi}_{k+1}$,

\begin{align}
    \widetilde{V}_{\mathrm{action}}(\bar{s}, \bar{\pi}_{k+1}(\bar{s})) &- \widetilde{V}_{\mathrm{action}}\left((\bar{s}^{(T)}, (0)_{T+1:\infty}), \bar{\pi}_{k+1}((\bar{s}^{(T)}, (0)_{T+1:\infty}))\right) \nonumber \\
    &\qquad  \leq \|\widetilde{V}_{\mathrm{action}}\|_\infty \sum_{t=T+1}^{\infty}(M_t + N_t) \nonumber  \\ 
    &\qquad \leq \frac{R_{\mathrm{max}}}{1-\gamma} \sum_{t=T+1}^{\infty}(M_t + N_t) 
    \label{eqn:h_k:bound:second_term}
\end{align}

By substituting \eqref{eqn:h_k:bound:first_term} and \eqref{eqn:h_k:bound:second_term} in \eqref{eqn:h_k:upper_bound_intermediate}, we get

\[
h_k(\bar{s}) \leq \frac{2R_{\mathrm{max}}}{1-\gamma}\sum_{t=T+1}^{\infty}(M_t + N_t)\:.
\]

Similarly, we can show that 
\[-h_k(\bar{s}) \leq \frac{2R_{\mathrm{max}}}{1-\gamma}\sum_{t=T+1}^{\infty}(M_t + N_t)\,.
\]

\end{proof}

%\clearpage

\vskip 0.2in
\bibliography{references}

\end{document}